\newtheorem{theorem}{Theorem}
\newtheorem{lemma}[theorem]{Lemma}
\newtheorem{definition}{Definition}
\newtheorem{assumption}{Assumption}
\title{SafeMed-R1: Adversarial Reinforcement Learning for Generalizable and Robust Medical Reasoning in Vision--Language Models}
\author{%
  A.A. Gde Yogi Pramana$^{*}$ \quad Jason Ray \quad Anthony Jaya \quad Michael Wijaya \\
  National University of Singapore \\
  \texttt{\{jason.ray,anthony.jaya,e1554354\}@u.nus.edu} \\
  \texttt{yogi.p@comp.nus.edu.sg}
}
\begin{document}

\maketitle

\begin{abstract}
  Vision--Language Models (VLMs) show significant promise for Medical Visual Question Answering (VQA), yet their deployment in clinical settings is hindered by severe vulnerability to adversarial attacks. Standard adversarial training, while effective for simpler tasks, often degrades both generalization performance and the quality of generated clinical reasoning. We introduce SafeMed-R1, a hybrid defense framework that ensures robust performance while preserving high-quality, interpretable medical reasoning. SafeMed-R1 employs a two-stage approach: at training time, we integrate Adversarial Training with Group Relative Policy Optimization (AT-GRPO) to explicitly robustify the reasoning process against worst-case perturbations; at inference time, we augment the model with Randomized Smoothing to provide certified $L_2$-norm robustness guarantees. We evaluate SafeMed-R1 on the OmniMedVQA benchmark across eight medical imaging modalities comprising over 88,000 samples. Our experiments reveal that standard fine-tuned VLMs, despite achieving 95\% accuracy on clean inputs, collapse to approximately 25\% under PGD attacks. In contrast, SafeMed-R1 maintains 84.45\% accuracy under the same adversarial conditions, representing a 59 percentage point improvement in robustness. Furthermore, we demonstrate that models trained with explicit chain-of-thought reasoning exhibit superior adversarial robustness compared to instruction-only variants, suggesting a synergy between interpretability and security in medical AI systems.
\end{abstract}

\section{Introduction}

The integration of Vision--Language Models (VLMs) into healthcare promises to revolutionize clinical workflows, particularly in Medical Visual Question Answering (VQA). These systems interpret complex medical images ($I$) to answer clinical questions ($Q$), facilitating diagnostic support and improving accessibility to medical expertise \cite{li2023llavamedtraininglargelanguageandvision, moor2023medflamingomultimodalmedicalfewshot}. However, the deployment of deep learning models in safety-critical domains is severely hampered by their susceptibility to adversarial attacks \cite{goodfellow2015explaining, carlini2017towards}.

In the medical context, this vulnerability is particularly alarming. Subtle, often imperceptible perturbations to input images whether maliciously crafted or arising from variations in imaging acquisition, can cause VLMs to generate drastically incorrect answers and misleading clinical reasoning \cite{finlayson2019adversarial, ma2021understanding}. Such failures undermine clinical trust and pose significant risks to patient safety, highlighting an urgent need for robust and reliable medical AI systems \cite{owasp-mlsec-top10}.

Traditional defense mechanisms, such as standard Adversarial Training (AT) \cite{madry2018towards}, aim to improve empirical robustness by optimizing against worst-case perturbations. While effective in simpler classification tasks, applying AT to complex reasoning models like VLMs presents unique challenges. In particular, AT can degrade the model's generalization capabilities on clean data and may compromise the coherence and quality of the generated reasoning, which is critical for clinical interpretability \cite{tsipras2019robustness}. Furthermore, empirical defenses offer no formal guarantees of stability.

To address these challenges, we introduce SafeMed-R1, a hybrid defense framework designed to ensure both robust performance and high-quality, generalizable medical reasoning. SafeMed-R1 employs a two-stage approach:
\begin{enumerate}
  \item \textbf{Training Time (Empirical Defense):} We integrate Adversarial Training with Group Relative Policy Optimization (GRPO) \cite{lai2025medr1reinforcementlearninggeneralizable, shao2024deepseekmathpushinglimitsmathematical}. GRPO is a stable reinforcement learning algorithm that explicitly optimizes the VLM for the quality of its reasoning trace. By applying AT within the GRPO framework (AT-GRPO), we robustify the reasoning process itself against worst-case multimodal perturbations, mitigating the degradation in reasoning quality typically seen with standard AT.
  \item \textbf{Inference Time (Certified Defense):} We augment the adversarially trained VLM with Randomized Smoothing (RS) \cite{cohen2019certified}. RS provides mathematical guarantees that the model's prediction remains stable within a defined $L_2$-norm neighborhood around the input, offering a layer of certified robustness crucial for clinical deployment.
\end{enumerate}

Our contributions are threefold: (1) we propose SafeMed-R1, the first framework to combine adversarial reinforcement learning (AT-GRPO) with certified defenses (RS) for medical VQA; (2) we show that integrating AT with GRPO significantly enhances robustness against PGD attacks while preserving high-quality clinical reasoning; and (3) we provide a comprehensive evaluation across eight medical modalities, demonstrating that SafeMed-R1 outperforms existing models in adversarial settings.

\section{Related Works}

Our work lies at the intersection of medical VLMs, reinforcement learning for reasoning, and adversarial robustness.

\paragraph{Medical Vision--Language Models.}
The development of specialized VLMs for the medical domain has accelerated with the availability of large-scale datasets. Early efforts focused on adapting general-domain architectures through domain-specific pretraining and instruction tuning. PMC-VQA \cite{zhang2024pmcvqavisualinstructiontuning} utilized data from PubMed Central to align medical images and text. LLaVA-Med \cite{li2023llavamedtraininglargelanguageandvision} demonstrated the effectiveness of low-cost instruction tuning using GPT-4-generated data on clinical figure, caption pairs. Med-Flamingo \cite{moor2023medflamingomultimodalmedicalfewshot} explored few-shot learning using interleaved medical data, highlighting alternatives when labeled data are scarce. While these models establish the feasibility of medical VQA, they primarily optimize accuracy on clean data and do not directly address adversarial vulnerabilities.

\paragraph{Reinforcement Learning for Clinical Reasoning.}
To move beyond imitation learning and enhance the logical coherence of VLM outputs, recent work has increasingly adopted reinforcement learning (RL). RL enables optimization of non-differentiable objectives, such as reasoning quality evaluated by external reward models \cite{ouyang2022training}. Med-VLM-R1 \cite{pan2025medvlmr1incentivizingmedicalreasoning} applies task-specific reward optimization to reduce hallucinations. Med-R1 \cite{lai2025medr1reinforcementlearninggeneralizable} introduces GRPO, a stable policy-gradient method, showing improvements in generalization and reasoning capabilities. RARL \cite{pham2025rarlimprovingmedicalvlm} combines RL with parameter-efficient tuning (LoRA). SafeMed-R1 builds on these advances by integrating GRPO within an adversarial training framework, explicitly optimizing reasoning quality under worst-case conditions.

\paragraph{Adversarial Robustness.}
The susceptibility of neural networks to adversarial examples, inputs perturbed slightly to induce misclassification, is well documented \cite{szegedy2014intriguing}. Common attack methods include the single-step Fast Gradient Sign Method (FGSM) \cite{goodfellow2015explaining}, iterative Projected Gradient Descent (PGD) \cite{madry2018towards}, and optimization-based attacks such as Carlini \& Wagner (C\&W) \cite{carlini2017towards}.

The predominant empirical defense is Adversarial Training (AT) \cite{madry2018towards}, which trains models on adversarially generated examples. While effective, AT often induces a trade-off between robustness and clean accuracy \cite{tsipras2019robustness}. To obtain formal guarantees, certified defenses have been proposed. Randomized Smoothing (RS) \cite{lecuyer2019certified, cohen2019certified} has emerged as a scalable method for providing $L_2$-norm certified robustness by evaluating the model's consensus over a noise-perturbed input distribution. In the medical domain, prior work has shown that imaging models are highly vulnerable to attacks \cite{finlayson2019adversarial}, but the robustness of medical VQA reasoning remains under-explored. SafeMed-R1 addresses this gap by combining AT (integrated with RL) and RS to provide a more comprehensive defense strategy for medical VLMs.

\section{Problem Formulation}

Our primary objective is to optimize a Vision--Language Model (VLM) for the demanding task of Medical Visual Question Answering (VQA). In this high-stakes domain, the model must not only deliver factually correct answers but also exhibit high-quality clinical reasoning and adhere to a precise output format. To achieve this comprehensive optimization, we employ a sequential two-stage training process: Supervised Fine-Tuning (SFT) followed by targeted Reinforcement Learning using Group Relative Policy Optimization (GRPO).

\paragraph{Task definition and notation}

We formalize the VQA task as follows. The model processes a multimodal input state $s=(I, Q)$, where $I$ represents a medical image and $Q$ is a clinical question. The VLM, parameterized by $\theta$, defines a policy $\pi_{\theta}(Y \mid s)$ that models the probability distribution over all possible output sequences $Y$.

\paragraph{Structured output format (Chain-of-Thought)}

To ensure interpretability and facilitate the evaluation of reasoning, we enforce a specific structure on the desired output $Y$. This structure must contain both the internal reasoning process (the thought trace $T$) and the final conclusion $A$:
\begin{equation}
  Y = \text{"<think>"} \circ T \circ \text{"</think><answer>"} \circ A \circ \text{"</answer>"},
\end{equation}
where $\circ$ denotes string concatenation. We also define an extraction function $\text{ExtractT}(Y) \rightarrow T$ that parses the reasoning steps $T$ from the generated sequence $Y$ \cite{wei2023chainofthoughtpromptingelicitsreasoning}.

\subsection{Stage 1: Supervised Fine-Tuning (SFT)}

The first stage lays the foundation for the model's behavior. The objective of SFT is to establish baseline correctness and strictly enforce the required Chain-of-Thought output format by leveraging high-quality demonstrations.

\paragraph{Data preparation}

This stage relies on a supervised dataset $\mathcal{D}_{SFT} = \{(s_i, Y^*_i)\}_{i=1}^N$. Crucially, $Y^*_i$ is the ground-truth output, already correctly formatted and containing both expert-level reasoning $T^*$ and the correct answer $A^*$.

\paragraph{Objective: Maximum Likelihood Estimation (MLE)}

The goal of SFT is to adjust the parameters $\theta$ such that the model maximizes the likelihood of mimicking the ground-truth sequences present in the dataset:
\begin{equation}
  \theta_{SFT}
  = \arg\max_{\theta} \sum_{i=1}^N \log \pi_{\theta}(Y^*_i \mid s_i).
\end{equation}

\paragraph{Loss function (cross-entropy)}

Since the VLM generates the sequence autoregressively, we model the probability of a sequence $Y^*=(y_1^*, \dots, y_L^*)$ using the chain rule:
\begin{equation}
  \pi_{\theta}(Y^* \mid s)
  = \prod_{t=1}^{L} P_\theta(y_t^* \mid s, y_{<t}^*).
\end{equation}
The optimization is therefore framed as minimizing the negative log-likelihood, resulting in the standard cross-entropy loss:
\begin{equation}
  \label{eq:sft_loss}
  \mathcal{L}_{SFT}(\theta)
  = - \mathbb{E}_{(s, Y^*) \sim \mathcal{D}_{SFT}}
    \left[ \sum_{t=1}^{L} \log P_\theta(y_t^* \mid s, y_{<t}^*) \right].
\end{equation}

Upon completion of this stage, the resulting policy $\pi_{\theta_{SFT}}$ reliably generates factually accurate answers and adheres to the required format. However, SFT optimizes for imitation, not necessarily the intrinsic quality or coherence of the reasoning itself. Thus, $\pi_{\theta_{SFT}}$ serves as the essential initialization for the next optimization phase.

\subsection{Stage 2: Group Relative Policy Optimization (GRPO)}

Having established the foundation, the objective of Stage 2 is to move beyond imitation and specifically enhance the \emph{quality} of the reasoning trace $T$—its coherence, logical flow, and clinical soundness. This requires a pivot from supervised learning to a reinforcement learning framework, initialized with $\theta_{SFT}$.

\paragraph{RL formulation}

We model the VQA task as a contextual bandit (a single-step MDP), where the state is the input $s=(I, Q)$, the action is the generation of the entire sequence $Y$, and the policy is the VLM $\pi_\theta$.

\paragraph{The reasoning reward model (RM)}

To optimize for reasoning quality, we need a mechanism to evaluate it. We introduce a specialized reward model (RM), $R_\phi$, parameterized by $\phi$. This RM is trained independently (e.g., on human preferences for clinical reasoning) to assess the quality of the thought trace $T$, irrespective of the final answer's correctness.

\begin{definition}[Reasoning quality reward]
\label{def:reasoning_quality_reward}
$R_\phi(T \mid s)$ provides a scalar score evaluating the coherence, logical structure, and clinical soundness of the trace $T$ given the context $s$.
\end{definition}

The reward signal $r(Y, s)$ during RL training is therefore derived by applying the RM specifically to the extracted reasoning segment:
\begin{equation}
  \label{eq:reward_definition}
  r(Y, s) = R_\phi(\text{ExtractT}(Y) \mid s).
\end{equation}

\paragraph{The RL objective}

With the reward signal defined, the goal is now to maximize the expected reward:
\begin{equation}
  \label{eq:rl_objective}
  J(\theta)
  = \mathbb{E}_{s \sim \mathcal{D}}
    \mathbb{E}_{Y \sim \pi_{\theta}(Y \mid s)} [r(Y, s)].
\end{equation}

\paragraph{Optimization via GRPO}

We utilize GRPO, a stable policy gradient algorithm, to optimize $J(\theta)$. The standard policy gradient theorem gives:
\begin{equation}
  \label{eq:policy_gradient}
  \nabla_\theta J(\theta)
  = \mathbb{E}_{s, Y \sim \pi_\theta} [r(Y, s)\, \nabla_\theta \log \pi_\theta(Y \mid s)].
\end{equation}
To reduce variance, we introduce an advantage function $A(Y, s) = r(Y, s) - V(s)$ and rewrite:
\begin{equation}
  \label{eq:advantage_gradient}
  \nabla_\theta J(\theta)
  = \mathbb{E}[A(Y, s)\, \nabla_\theta \log \pi_\theta(Y \mid s)].
\end{equation}

GRPO avoids a learned critic by normalizing rewards within a group of $K$ samples:
\begin{equation}
  \label{eq:grpo_sampling}
  \{Y_i\}_{i=1}^K \sim \pi_{\theta_{old}}(\cdot \mid s),
\end{equation}
\begin{equation}
  \label{eq:grpo_stats}
  \bar{r} = \frac{1}{K}\sum_{i=1}^K r_i, \quad
  \text{Var}(r) = \frac{1}{K-1}\sum_{i=1}^K (r_i - \bar{r})^2,
\end{equation}
\begin{equation}
  \label{eq:grpo_advantage}
  \hat{A}_i = \frac{r_i - \bar{r}}{\sqrt{\text{Var}(r) + \epsilon_{std}}}.
\end{equation}

We define the importance sampling ratio:
\begin{equation}
  \label{eq:importance_sampling}
  \rho_i(\theta)
  = \frac{\pi_\theta(Y_i \mid s)}{\pi_{\theta_{old}}(Y_i \mid s)}.
\end{equation}
Then the GRPO objective is:
\begin{equation}
  \label{eq:grpo_loss}
  \mathcal{L}^{GRPO}(\theta)
  = \mathbb{E} \left[
    \min\left(
      \rho_i(\theta)\, \hat{A}_i,\,
      \text{clip}(\rho_i(\theta), 1 - \epsilon_{clip}, 1 + \epsilon_{clip})\, \hat{A}_i
    \right)
  \right].
\end{equation}

The resulting model $\pi_{\theta^*}$ successfully retains the formatting and baseline correctness learned during SFT, while the reasoning segments $T$ are now explicitly optimized for coherence, logical structure, and clinical soundness as defined by the reward model $R_\phi$ \cite{shao2024deepseekmathpushinglimitsmathematical}.

\section{Method}

\subsection{General Overview}

\begin{figure}[H]
  \centering
  \includegraphics[width=\textwidth]{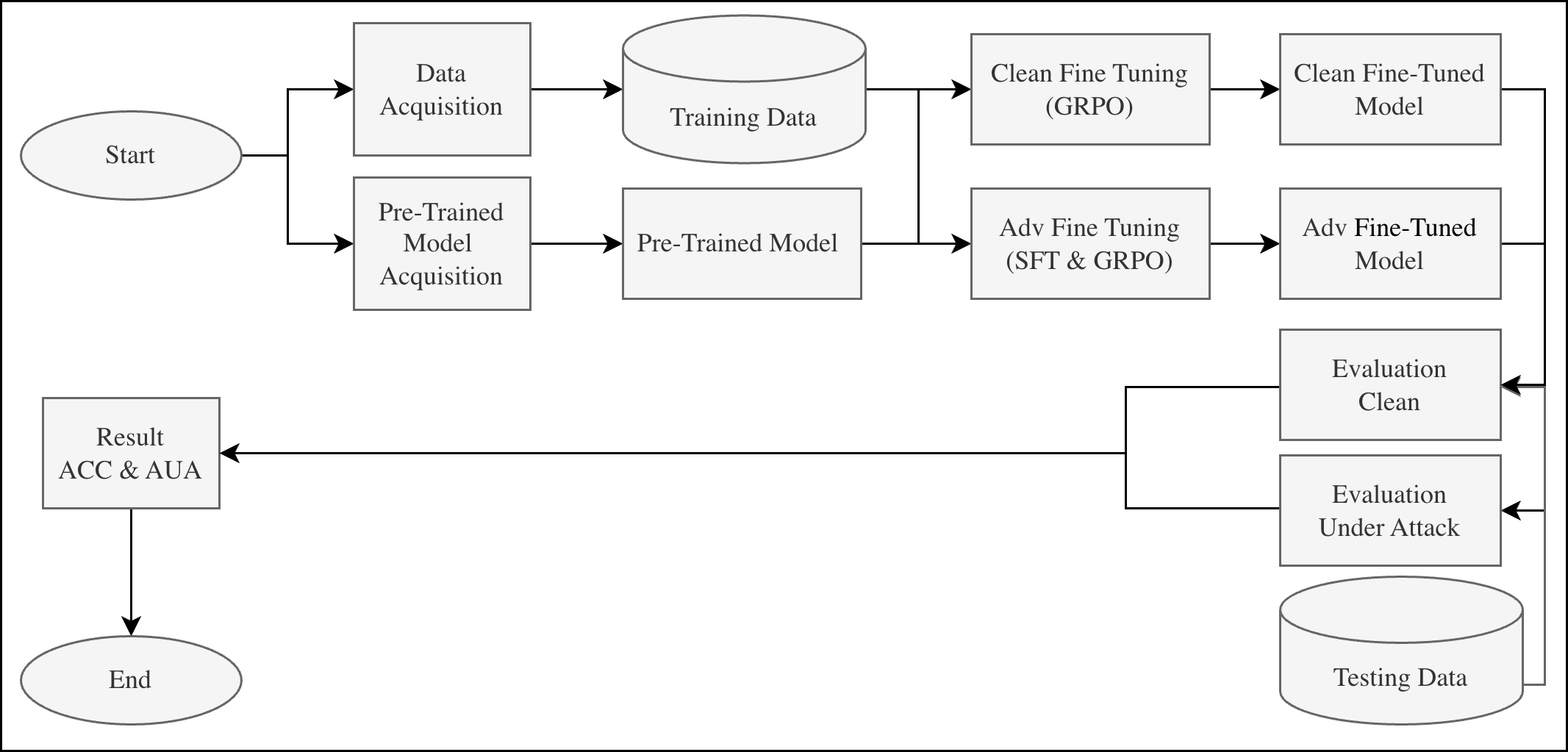}
  \caption{Method general overview.}
  \label{fig:general-overview}
\end{figure}

Figure~\ref{fig:general-overview} summarizes our workflow. We first collect downstream task data and construct training and test splits (\emph{Data Acquisition} $\rightarrow$ \emph{Training/Testing Data}), while in parallel acquiring an off-the-shelf vision--language checkpoint as the base \emph{Pre-Trained Model}. From this checkpoint we train two variants: a \emph{clean} model obtained via GRPO on clean training data (\emph{Clean Fine-Tuning (GRPO)} $\rightarrow$ \emph{Clean Fine-Tuned Model}) and an \emph{adversarial} model obtained via SFT+GRPO on a mixture of clean and adversarially perturbed samples (\emph{Adv Fine-Tuning (SFT \& GRPO)} $\rightarrow$ \emph{Adv Fine-Tuned Model}). Both models are evaluated on the held-out \emph{Testing Data}, measuring performance on clean inputs (\emph{Evaluation Clean}) and under a suite of adversarial attacks (\emph{Evaluation Under Attack}). We report standard clean accuracy (ACC) and our robustness metric AUA (area under the accuracy-under-attack curve), enabling a direct comparison of how each training strategy trades off task performance and robustness.

\subsection{Data and Model Acquisition}

\subsubsection{Data acquisition}

We use OmniMedVQA \cite{hu2024omnimedvqa}, a comprehensive medical-domain VQA benchmark compiled from 73 publicly available medical datasets. From this corpus, we select over 88{,}000 image question pairs across 8 different modalities, which we split into training and testing sets, detailed splits are provided in Table \ref{tab:dataset_splits}.

\begin{table} [H]
    \caption{Dataset Splits (Train:Test = 80:20)\label{tab:dataset_splits}}
    \centering
    \begin{tabular}{lrrr}
    \toprule
    \textbf{Modality} & \textbf{Test} & \textbf{Train} & \textbf{$\sum$} \\
    \midrule
    CT (Computed Tomography) & 3241 & 12567 & 15808 \\
    Dermoscopy & 1306 & 5373 & 6679 \\
    Fundus Photography & 1098 & 4300 & 5398 \\
    Microscopy Images & 1110 & 4570 & 5680 \\
    MRI (Magnetic Resonance Imaging) & 6370 & 25507 & 31877 \\
    OCT (Optical Coherence Tomography) & 848 & 3798 & 4646 \\
    Ultrasound & 2074 & 8917 & 10991 \\
    X-Ray & 1615 & 6301 & 7916 \\
    \midrule
    \textbf{$\sum$} & \textbf{17662} & \textbf{71333} & \textbf{88995} \\
    \bottomrule
    \end{tabular}
\end{table}

\subsubsection{Model acquisition}

We obtain our pre-trained models from Hugging Face. Specifically we use Qwen2.5-VL 3B and 7B \cite{qwen2025qwen25technicalreport}, and Qwen3-VL 4B and 8B \cite{yang2025qwen3technicalreport} as our base vision--language architectures. These models were selected due to their strong multimodal reasoning capabilities, open availability, and compatibility with both LoRA and GRPO finetuning frameworks. All checkpoints are downloaded directly through the official Hugging Face repositories to ensure consistency and reproducibility.

\subsection{Fine-Tuning Methodology}

We optimize the VLM policy $\pi_\theta$ for Medical VQA using a two-stage procedure: Supervised Fine-Tuning (SFT) followed by Reinforcement Learning with Group Relative Policy Optimization (GRPO). The first stage targets answer correctness and adherence to the required output format, while the second stage explicitly improves the quality and coherence of clinical reasoning. We compare the standard setting (Clean Fine-Tuning) against our robust variant (Adversarial Fine-Tuning) within the SafeMed-R1 framework.

\subsubsection{Clean Fine-Tuning}

The standard fine-tuning pipeline maximizes performance on the distribution of clean medical inputs. It follows the two-stage process in Section~3, using the SFT loss in Eq.~\eqref{eq:sft_loss} and the GRPO objective in Eq.~\eqref{eq:grpo_loss}, trained solely on the original dataset $\mathcal{D}$. The procedure is summarized in Algorithm~\ref{alg:clean_ft}.

\begin{algorithm}[H]
\caption{Clean Fine-Tuning (SFT + GRPO)}
\label{alg:clean_ft}
\begin{algorithmic}[1]
\Require VLM parameters $\theta$, reward model $R_\phi$, datasets $\mathcal{D}_{\mathrm{SFT}}, \mathcal{D}_{\mathrm{RL}}$
\State \textbf{Stage 1: SFT}
\State Optimize $\theta$ by minimizing $\mathcal{L}_{\mathrm{SFT}}(\theta)$ (Eq.~\eqref{eq:sft_loss}) on $\mathcal{D}_{\mathrm{SFT}}$
\State $\theta_{\mathrm{SFT}} \leftarrow \theta$
\State \textbf{Stage 2: GRPO} \hfill (initialize RL from SFT)
\State $\theta \leftarrow \theta_{\mathrm{SFT}}$
\For{iteration $j = 1, 2, \dots$}
  \State $\pi_{\theta_{\mathrm{old}}} \leftarrow \pi_\theta$
  \For{minibatch $B \subset \mathcal{D}_{\mathrm{RL}}$}
    \For{state $s \in B$}
      \State Sample $\{Y_i\}_{i=1}^K \sim \pi_{\theta_{\mathrm{old}}}(\cdot \mid s)$
      \State Compute rewards $\{r_i\}$ using $R_\phi$ (Eq.~\eqref{eq:reward_definition})
      \State Estimate advantages $\{\hat{A}_i\}$ via Eq.~\eqref{eq:grpo_advantage}
    \EndFor
  \EndFor
  \State Update $\theta$ by maximizing $\mathcal{L}^{\mathrm{GRPO}}(\theta)$ (Eq.~\eqref{eq:grpo_loss})
\EndFor
\end{algorithmic}
\end{algorithm}

\subsubsection{Adversarial Fine-Tuning (SafeMed-R1)}

To induce robustness against input perturbations, SafeMed-R1 integrates adversarial training (AT) into both the SFT and GRPO stages. AT recasts learning as a robust minimax problem:
\begin{equation}
  \label{eq:minimax_obj}
  \theta^*
  = \arg\min_{\theta} \mathbb{E}_{s}
  \left[
    \max_{\delta \in \Delta(s)}
    \mathcal{L}\bigl(\theta, s + \delta\bigr)
  \right],
\end{equation}
where $\Delta(s)$ is the threat set, and $\mathcal{L}$ denotes the stage-specific objective (either $\mathcal{L}_{\mathrm{SFT}}$ or the negative expected reward $-J(\theta)$).

\paragraph{Inner Maximization (Attack Generation).}

The inner maximization searches for a worst-case perturbation $\delta^*$. We approximate this using Projected Gradient Descent (PGD), applied primarily to the visual modality $I$. At iteration $k$,
\begin{equation}
  \label{eq:pgd_update}
  \delta^{k+1}
  = \Pi_{\Delta(I)}\!\left(
      \delta^k
      + \alpha \cdot \mathrm{sign}
      \bigl(
        \nabla_\delta \mathcal{L}\bigl(\theta_{\mathrm{old}}, (I + \delta^k, Q)\bigr)
      \bigr)
    \right),
\end{equation}
where $\alpha$ is the step size and $\Pi_{\Delta(I)}$ projects back onto the $L_p$-ball $\Delta(I)$. This yields the adversarial state $s'_{\mathrm{adv}} = s + \delta^*$.

\paragraph{Implementation Details.}

During AT-SFT (Stage 1), PGD maximizes $\mathcal{L}_{\mathrm{SFT}}$. During AT-GRPO (Stage 2), the adversary minimizes the expected reward $J(\theta)$, which is equivalent to maximizing $\mathcal{L} = -J(\theta)$. The outer loop then updates $\theta$ on these adversarial examples. The full SafeMed-R1 procedure is given in Algorithm~\ref{alg:adv_ft}.

\begin{algorithm}[H]
\caption{Adversarial Fine-Tuning (SafeMed-R1: AT-SFT + AT-GRPO)}
\label{alg:adv_ft}
\begin{algorithmic}[1]
\Require VLM parameters $\theta$, reward model $R_\phi$, PGD parameters $(\alpha, N_{\mathrm{PGD}}, \epsilon)$
\State \textbf{Stage 1: AT-SFT}
\While{not converged}
  \For{minibatch $B \subset \mathcal{D}_{\mathrm{SFT}}$}
    \State $B'_{\mathrm{adv}} \leftarrow \emptyset$
    \For{$(s, Y^*) \in B$} \Comment{inner maximization: maximize SFT loss}
      \State Generate $\delta^*$ using PGD (Eq.~\eqref{eq:pgd_update}) with $\mathcal{L} = \mathcal{L}_{\mathrm{SFT}}$
      \State $s'_{\mathrm{adv}} \leftarrow s + \delta^*$
      \State Add $(s'_{\mathrm{adv}}, Y^*)$ to $B'_{\mathrm{adv}}$
    \EndFor
    \State Update $\theta$ by minimizing $\mathcal{L}_{\mathrm{SFT}}(\theta)$ on $B'_{\mathrm{adv}}$ \Comment{outer minimization}
  \EndFor
\EndWhile
\State \textbf{Stage 2: AT-GRPO}
\For{iteration $j = 1, 2, \dots$}
  \State $\pi_{\theta_{\mathrm{old}}} \leftarrow \pi_\theta$
  \For{minibatch $B \subset \mathcal{D}_{\mathrm{RL}}$}
    \For{state $s \in B$} \Comment{inner maximization: minimize reward}
      \State Generate $\delta^*$ using PGD (Eq.~\eqref{eq:pgd_update}) with $\mathcal{L} = -J(\theta_{\mathrm{old}})$
      \State $s'_{\mathrm{adv}} \leftarrow s + \delta^*$
      \State Sample $\{Y_i\}_{i=1}^K \sim \pi_{\theta_{\mathrm{old}}}(\cdot \mid s'_{\mathrm{adv}})$
      \State Compute rewards $\{r_i\}$ and advantages $\{\hat{A}_i\}$
    \EndFor
  \EndFor
  \State Update $\theta$ by maximizing $\mathcal{L}^{\mathrm{GRPO}}(\theta)$ on adversarial trajectories \Comment{outer maximization}
\EndFor
\end{algorithmic}
\end{algorithm}

\subsection{Evaluation}

We assess both empirical robustness (against concrete attacks) and certified robustness (Randomized Smoothing).

\subsubsection{Adversarial threat landscape}

Given $(I, Q)$ and policy $\pi_{\theta^*}$, the adversary searches for $\delta_I$:
\begin{equation}
  \delta_I^* \in \Delta(I)
  \ \text{s.t.} \
  \text{ExtractA}(\pi_{\theta^*}(I+\delta_I^*, Q))
  \neq
  \text{ExtractA}(\pi_{\theta^*}(I, Q)).
\end{equation}

We consider several attack methods:

\begin{enumerate}
  \item \textbf{FGSM}:
  \begin{equation}
    \delta^{FGSM}
    = \epsilon \cdot \text{sign}(\nabla_I \mathcal{L}(\pi_{\theta^*}, I, Q)).
  \end{equation}
  \item \textbf{PGD}:
  \begin{equation}
    \delta^{k+1}
    = \Pi_{\Delta(I)} \left(
      \delta^k + \alpha \cdot \text{sign}(\nabla_I \mathcal{L}(\pi_{\theta^*}, I+\delta^k, Q))
    \right).
  \end{equation}
  \item \textbf{C\&W}:
  \begin{equation}
    \min_{\delta_I} \|\delta_I\|_2 + c \cdot f(I+\delta_I, Q),
  \end{equation}
  where $f$ promotes misclassification.
\end{enumerate}

\subsubsection{Certified robustness via Randomized Smoothing}

We define the smoothed policy $\Pi(I,Q)$ by adding Gaussian noise to $I$ and taking the most likely answer:
\begin{equation}
  \label{eq:smoothed_policy}
  \Pi(I, Q)
  = \arg\max_{A \in \mathcal{A}}
    P_{\eta \sim \mathcal{N}(0, \sigma^2 \mathbf{I})}
    \bigl(
      \text{ExtractA}(\pi_{\theta^*}(I+\eta, Q)) = A
    \bigr).
\end{equation}

\begin{definition}[Visually smoothed VQA policy $\Pi$]
\label{def:smoothed_policy}
The smoothed policy $\Pi(I, Q)$ returns the final answer $A$ that is statistically most likely to be chosen by the base policy when the image $I$ is perturbed by Gaussian noise $\eta$.
\end{definition}

\begin{theorem}[$L_2$ certified robustness (adapted from Cohen et al., 2019)]
\label{thm:l2_certified_robustness}
Let $\Pi$ be the smoothed policy (Eq.~\ref{eq:smoothed_policy}) with noise level $\sigma$. Suppose for input $(I, Q)$, the most probable answer is $A_A$ with probability $p_A$, and the runner-up probability is $p_B$. Then $\Pi$ is provably robust against any perturbation $\delta_I$ with $\|\delta_I\|_2 < R_{cert}$, where:
\begin{equation}
  \label{eq:certified_radius}
  R_{cert}
  = \frac{\sigma}{2} \left(\Phi^{-1}(p_A) - \Phi^{-1}(p_B)\right),
\end{equation}
and $\Phi^{-1}$ is the inverse CDF of the standard Gaussian.
\end{theorem}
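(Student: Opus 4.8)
\section*{Proof Plan for Theorem~\ref{thm:l2_certified_robustness}}

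The plan is to adapt the Neyman--Pearson argument of Cohen et al.\ to the VQA setting. Fix the question $Q$ and write $g(z) := \text{ExtractA}(\pi_{\theta^*}(z, Q))$ for the induced deterministic map from images to the finite answer set $\mathcal{A}$, so that $\Pi(I,Q) = \arg\max_{A\in\mathcal{A}} P_{\eta \sim \mathcal{N}(0,\sigma^2\mathbf{I})}(g(I+\eta)=A)$. Let $\delta_I$ be an arbitrary perturbation and set $X \sim \mathcal{N}(I, \sigma^2\mathbf{I})$ and $X' \sim \mathcal{N}(I+\delta_I, \sigma^2\mathbf{I})$. By hypothesis $P(g(X)=A_A)=p_A$, while $P(g(X)=A_C)\le p_B$ for every $C$ with $A_C\neq A_A$ (since $p_B$ is the second-largest class probability). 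To conclude $\Pi(I+\delta_I,Q)=A_A$ it suffices to show $P(g(X')=A_A) > P(g(X')=A_C)$ for all $A_C\neq A_A$; this follows from a lower bound on the first term and a uniform upper bound on the remaining ones.

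First I would establish the two one-sided transfer bounds. Because the likelihood ratio $d\mathcal{N}(I+\delta_I,\sigma^2\mathbf{I})/d\mathcal{N}(I,\sigma^2\mathbf{I})$ at $z$ is a strictly increasing function of $\langle\delta_I, z\rangle$, its level sets are exactly the half-spaces orthogonal to $\delta_I$, and the Neyman--Pearson lemma gives: among all measurable $S$ with $P(X\in S)\ge p$, the value $P(X'\in S)$ is minimized by the half-space $\underline{H}=\{z:\langle\delta_I,z-I\rangle\le \sigma\|\delta_I\|_2\,\Phi^{-1}(p)\}$; and among all $S$ with $P(X\in S)\le p$ it is maximized by the complementary half-space. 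Using $\langle\delta_I,X-I\rangle/(\sigma\|\delta_I\|_2)\sim\mathcal{N}(0,1)$ and $\langle\delta_I,X'-I\rangle/(\sigma\|\delta_I\|_2)\sim\mathcal{N}(\|\delta_I\|_2/\sigma,1)$, a direct computation of the half-space probabilities yields
\[
P(X\in S)\ge p \ \Longrightarrow\ P(X'\in S)\ge \Phi\!\Bigl(\Phi^{-1}(p)-\tfrac{\|\delta_I\|_2}{\sigma}\Bigr),
\]
\[
P(X\in S)\le p \ \Longrightarrow\ P(X'\in S)\le \Phi\!\Bigl(\Phi^{-1}(p)+\tfrac{\|\delta_I\|_2}{\sigma}\Bigr).
\]

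Applying the first bound with $S=g^{-1}(A_A)$ and $p=p_A$ gives $P(g(X')=A_A)\ge \Phi(\Phi^{-1}(p_A)-\|\delta_I\|_2/\sigma)$, and the second with $S=g^{-1}(A_C)$ and $p=p_B$ gives $P(g(X')=A_C)\le \Phi(\Phi^{-1}(p_B)+\|\delta_I\|_2/\sigma)$ for every competitor. Since $\Phi$ is strictly increasing, the smoothed prediction at $I+\delta_I$ stays $A_A$ whenever $\Phi^{-1}(p_A)-\|\delta_I\|_2/\sigma > \Phi^{-1}(p_B)+\|\delta_I\|_2/\sigma$, which rearranges exactly to $\|\delta_I\|_2 < \frac{\sigma}{2}\bigl(\Phi^{-1}(p_A)-\Phi^{-1}(p_B)\bigr)=R_{cert}$, as claimed (the case $\delta_I=0$ being trivial). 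The main obstacle is the Neyman--Pearson extremality step: one must argue carefully that half-spaces orthogonal to $\delta_I$ are the worst-case sets, via the standard mass-exchange argument (moving probability from $S\setminus\underline{H}$ to $\underline{H}\setminus S$ leaves $P(X\in S)$ fixed while not increasing $P(X'\in S)$). Minor additional care is needed for measurability of $g^{-1}(A)$ — immediate since $\pi_{\theta^*}$ and $\text{ExtractA}$ are fixed and $\mathcal{A}$ is finite — and for the fact that in practice $p_A,p_B$ are replaced by Monte-Carlo confidence bounds, which only shrinks $R_{cert}$ and hence preserves the guarantee.
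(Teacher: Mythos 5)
The paper does not present its own proof of this theorem: it is stated as ``adapted from Cohen et al., 2019'' and the derivation is deferred to that work (the appendix only analyzes the Monte Carlo estimation error and the confidence of the certified radius, in Theorem~\ref{thm:concentration_bound_rs} and Lemma~\ref{lem:reliability_certification}, not the core certification bound). Your reconstruction of Cohen et al.'s Neyman--Pearson argument is correct and is exactly the intended underlying proof: once you fix $Q$ and treat $g(\cdot) := \text{ExtractA}(\pi_{\theta^*}(\cdot,Q))$ as a deterministic map into the finite answer set, the argument transfers verbatim. The likelihood ratio of the two Gaussians is monotone in $\langle \delta_I, z\rangle$, the Neyman--Pearson lemma makes half-spaces orthogonal to $\delta_I$ extremal, the two transfer bounds $P(g(X')=A_A)\ge\Phi(\Phi^{-1}(p_A)-\|\delta_I\|_2/\sigma)$ and $P(g(X')=A_C)\le\Phi(\Phi^{-1}(p_B)+\|\delta_I\|_2/\sigma)$ follow from the half-space probabilities you compute, and the rearrangement yields precisely $R_{cert}=\tfrac{\sigma}{2}(\Phi^{-1}(p_A)-\Phi^{-1}(p_B))$. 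Your closing note on replacing $(p_A,p_B)$ with a one-sided confidence bound $p_L$ (and the pessimistic $p_B=1-p_L$) is also the right observation; it is exactly why Algorithm~\ref{alg:rs_evaluation} reports the collapsed radius $\sigma\Phi^{-1}(p_L)$.
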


We estimate $p_A$ via Monte Carlo and compute a high-confidence lower bound to obtain a certified radius. Algorithm~\ref{alg:rs_evaluation} summarizes the procedure.

\begin{algorithm}[H]
\caption{Certified robustness evaluation via Randomized Smoothing}
\label{alg:rs_evaluation}
\begin{algorithmic}[1]
\Require Input $(I, Q)$, base policy $\pi_{\theta^*}$, noise level $\sigma$, samples $N_{pred}, N_{cert}$, confidence $\alpha$.
\Ensure Smoothed prediction $A^*$, certified radius $R_{cert}$.
\State \textbf{1. Prediction (majority vote estimation):}
\State Sample $\{\eta_i\}_{i=1}^{N_{pred}} \sim \mathcal{N}(0, \sigma^2 \mathbf{I})$.
\State $A_i \leftarrow \text{ExtractA}(\pi_{\theta^*}(I+\eta_i, Q))$ for $i=1,\dots,N_{pred}$.
\State $A^* \leftarrow \arg\max_{A} \sum_{i=1}^{N_{pred}} \mathbb{I}(A_i = A)$.
\State \textbf{2. Certification (radius calculation):}
\State Sample $\{\eta'_j\}_{j=1}^{N_{cert}} \sim \mathcal{N}(0, \sigma^2 \mathbf{I})$.
\State $C_{A^*} \leftarrow \sum_{j=1}^{N_{cert}} \mathbb{I}(\text{ExtractA}(\pi_{\theta^*}(I+\eta'_j, Q)) = A^*)$.
\State $p_L \leftarrow \text{BinomialLowerBound}(C_{A^*}, N_{cert}, \alpha)$.
\If{$p_L > 0.5$}
  \State $R_{cert} \leftarrow \sigma \Phi^{-1}(p_L)$.
\Else
  \State $R_{cert} \leftarrow 0$.
\EndIf
\State \Return $A^*, R_{cert}$.
\end{algorithmic}
\end{algorithm}

\section{Results and Discussion}

We evaluate the SafeMed-R1 framework on a comprehensive benchmark of medical Visual Question Answering (VQA) tasks, focusing on both clinical accuracy and the robustness of the generated reasoning. We compare our fine-tuned models against an extensive set of state-of-the-art zero-shot VLMs (both general-purpose and medical-specific) under standard (clean) conditions and under strong adversarial attacks (PGD).

\subsection{Experimental Setup}

\paragraph{Evaluation Protocol.}
We assess models using the OmniMedVQA dataset, spanning eight distinct imaging modalities. Performance is measured by accuracy, defined as the percentage of correctly answered questions. For each input image–question pair, the model generates a response that is compared against the ground truth.

We conduct two primary evaluations:
\begin{enumerate}
  \item \textbf{Clean Evaluation:} The VLM receives the original, unperturbed medical image as input.
  \item \textbf{Adversarial Evaluation (PGD):} The VLM receives an image perturbed using the Projected Gradient Descent (PGD) algorithm, simulating a strong first-order adversarial attack on the visual input.
\end{enumerate}

\paragraph{VQA Format and Model Variants.}
We evaluate two primary configurations of our fine-tuned Qwen3-VL-4B models, corresponding to different stages and objectives of the training pipeline:
\begin{itemize}
  \item \textbf{Instruct (Instr.):} Models optimized via SFT (or AT-SFT) to directly generate the final answer within \verb|<answer>| tags.
  \item \textbf{Thinking (Think):} Models optimized via SFT+GRPO (or AT-SFT+AT-GRPO) to generate an explicit Chain-of-Thought (CoT) reasoning trace within \verb|<think>| tags, followed by the final answer. The GRPO stage specifically targets the quality of this reasoning.
\end{itemize}
Figure~\ref{fig:vqa_examples} illustrates the structured input and corresponding outputs for both variants.

\begin{figure}[t]
  \centering
  \includegraphics[width=0.8\textwidth]{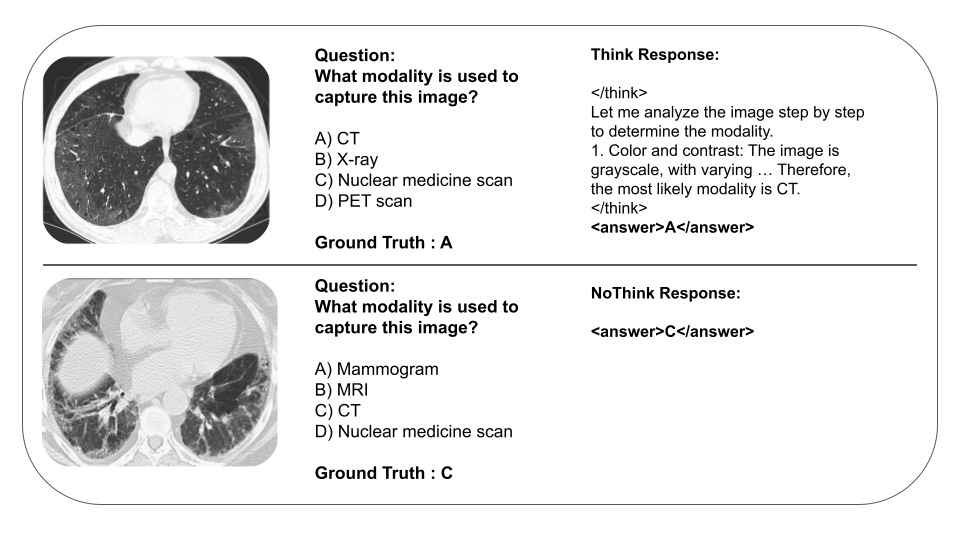}
  \caption{Examples of Medical VQA inputs and the corresponding responses from the SafeMed-R1 \textbf{Think} (top) and \textbf{Instruct} (bottom) configurations. The Think model generates explicit reasoning steps before concluding the answer.}
  \label{fig:vqa_examples}
\end{figure}

\subsection{Performance under Clean Evaluation}

Table~\ref{tab:vlm-modality-pgd} details the performance when models are subjected to PGD attacks on the input images. This evaluation reveals the true resilience of the models. We observe a substantial performance gap between zero-shot models and our fine-tuned models. The best zero-shot general VLM (Qwen3-VL-4B) reaches 78.00\% overall accuracy, and the best medical VLM (MedGemma-4B) reaches 68.75\%. In contrast, our standard fine-tuned models achieve 95.43\% (Instruct) and 94.25\% (Think) overall accuracy. This highlights the necessity of domain-specific fine-tuning for high-stakes applications such as medical VQA.

\begin{table}[h]
  \caption{Performance (Accuracy \%) of general-purpose, medical, fine-tuned (Clean FT), and adversarially fine-tuned (Adv FT) VLMs across imaging modalities under \textbf{clean evaluation}. The highest accuracy within the fine-tuned categories (Clean FT and Adv FT) for each modality and overall is highlighted in \textcolor{red}{\textbf{red}}.}
  \label{tab:vlm-modality-clean}
  \centering
  \setlength{\tabcolsep}{3pt}
  \resizebox{\columnwidth}{!}{%
    \begin{tabular}{lccccccccc}
      \toprule
      Methods & \multicolumn{8}{c}{Modality} & Overall \\
      \cmidrule(lr){2-9}
              & CT & MRI & X-Ray & Ultrasound & Dermoscopy & Fundus & OCT & Microscopy & \\
      \midrule
      \multicolumn{10}{c}{\textbf{Zero-shot VLMs (General Purpose)}} \\
      \midrule
      BLIP-2               & 56.74 & 41.32 & 67.58 & 37.27 & 40.65 & 46.24 & 68.08 & 50.40 & 51.04 \\
      InstructBLIP         & 28.72 & 33.15 & 61.04 & 41.25 & 62.22 & 50.31 & 42.59 & 46.29 & 45.70 \\
      LLaVA                & 17.73 & 26.72 & 30.70 & 18.66 & 49.74 & 47.11 & 33.73 & 28.87 & 31.66 \\
      LLaMA Adapter v2     & 21.41 & 26.63 & 46.44 & 34.05 & 51.76 & 50.74 & 33.00 & 38.66 & 37.83 \\
      MiniGPT-4            & 22.81 & 27.48 & 38.30 & 25.50 & 40.25 & 38.33 & 31.40 & 36.23 & 32.54 \\
      InternVL2            & 40.20 & 58.10 & 57.90 & 49.10 & 51.90 & 53.20 & 59.10 & 64.00 & 54.19 \\
      Qwen2-VL-2B          & 45.10 & 38.57 & 39.32 & 30.86 & 35.83 & 43.17 & 35.14 & 36.85 & 38.11 \\
      Qwen2-VL-7B          & 61.46 & 45.77 & 64.27 & 36.01 & 49.08 & 59.84 & 59.32 & 61.08 & 54.60 \\
      Qwen2-VL-72B         & 67.97 & 69.39 & 77.21 & 51.39 & 65.31 & 72.58 & 72.76 & 67.83 & 68.05 \\
      Qwen2.5-VL-3B        & 59.00 & 64.00 & 79.00 & 33.00 & 64.00 & 70.00 & 76.00 & 63.00 & 63.50 \\
      Qwen2.5-VL-7B        & 60.00 & 67.00 & 73.00 & 33.00 & 65.00 & 62.00 & 64.00 & 68.00 & 61.50 \\
      Qwen2.5-VL-72B       & 66.18 & 68.74 & 77.59 & 49.81 & 69.75 & 71.04 & 69.22 & 69.37 & 67.71 \\
      Qwen3-VL-4B          & 58.00 & 76.00 & 89.00 & 80.00 & 78.00 & 87.00 & 72.00 & 84.00 & 78.00 \\
      Qwen3-VL-8B          & 63.00 & 77.00 & 85.00 & 62.00 & 77.00 & 82.00 & 75.00 & 79.00 & 75.00 \\
      \midrule
      \multicolumn{10}{c}{\textbf{Zero-shot Medical VLMs}} \\
      \midrule
      LLaVA-Med            & 18.69 & 27.47 & 30.68 & 29.88 & 44.95 & 39.03 & 34.61 & 33.29 & 32.33 \\
      RadFM                & 27.56 & 24.06 & 30.95 & 16.57 & 39.21 & 36.89 & 32.80 & 27.97 & 29.50 \\
      Med-Flamingo         & 38.47 & 40.56 & 30.34 & 24.64 & 32.43 & 30.12 & 26.51 & 19.93 & 30.38 \\
      MedVInT              & 40.74 & 43.10 & 55.10 & 41.26 & 29.11 & 31.84 & 23.26 & 32.00 & 37.05 \\
      HuatouGPT-Vision     & 35.30 & 40.40 & 41.50 & 60.10 & 53.10 & 51.40 & 59.30 & 62.30 & 50.43 \\
      HealthGPT            & 35.50 & 78.50 & 81.90 & 51.40 & 64.90 & 54.60 & 89.30 & 88.20 & 68.04 \\
      MedGemma-4B          & 66.00 & 62.00 & 75.00 & 53.00 & 74.00 & 79.00 & 76.00 & 65.00 & 68.75 \\
      \midrule
      \multicolumn{10}{c}{\textbf{Fine-tuned VLMs (Clean FT)}} \\
      \midrule
      Qwen2-VL-2B (SFT)            & 51.74 & 52.83 & 65.57 & 47.65 & 51.91 & 52.26 & 53.99 & 56.58 & 54.07 \\
      \textbf{Qwen3-VL-4B Instr.}  & \textcolor{red}{\textbf{96.62}} & 92.72 & 92.16 & 95.24 & \textcolor{red}{\textbf{96.83}} & 94.02 & \textcolor{red}{\textbf{99.32}} & 96.51 & \textcolor{red}{\textbf{95.43}} \\
      \textbf{Qwen3-VL-4B Think}   & 94.57 & \textcolor{red}{\textbf{93.73}} & \textcolor{red}{\textbf{93.26}} & \textcolor{red}{\textbf{96.93}} & 94.17 & 90.57 & 93.78 & \textcolor{red}{\textbf{97.01}} & 94.25 \\
      \midrule
      \multicolumn{10}{c}{\textbf{Adversarial Fine-tuned VLMs (Adv FT -- SafeMed-R1)}} \\
      \midrule
      \textbf{Qwen3-VL-4B Instr.}  & 89.64 & 89.58 & 92.78 & 92.79 & 93.71 & \textcolor{red}{\textbf{95.64}} & 94.52 & 93.50 & 92.77 \\
      \textbf{Qwen3-VL-4B Think}   & 94.50 & 90.91 & 91.26 & 90.05 & 90.64 & 93.68 & 88.83 & 91.90 & 91.47 \\
      \bottomrule
    \end{tabular}%
  }
\end{table}

Robustness--accuracy trade-off, consistent with prior work on adversarial training, we see a trade-off between clean accuracy and robustness. Adversarially fine-tuned models (SafeMed-R1) show slightly lower clean accuracy (92.77\% Instr., 91.47\% Think) than standard fine-tuned models (95.43\% and 94.25\%). The drop (e.g., 95.43\% $\rightarrow$ 92.77\% for Instruct) is expected when parameters are optimized to handle worst-case perturbations, which tends to yield smoother decision boundaries that can misclassify some clean inputs near the boundary. Impact of reasoning (Think vs.\ Instruct) under clean evaluation, Instruct models (SFT only) slightly outperform Think models (SFT+GRPO) overall (95.43\% vs.\ 94.25\%). GRPO targets reasoning quality rather than pure answer accuracy; in some modalities (e.g., CT, Dermoscopy, OCT), the explicit reasoning step slightly reduces performance compared to the best Instruct model. This suggests a mild misalignment: optimizing for reasoning coherence (as scored by the reward model) does not always coincide with maximizing answer accuracy on clean multiple-choice questions.

\subsection{Performance under Adversarial Evaluation (PGD Attack)}

Table~\ref{tab:vlm-modality-pgd} details the performance when models are subjected to PGD attacks on the input images. This evaluation reveals the true resilience of the models. There is vulnerability of standard models. Despite strong clean performance, standard fine-tuned models collapse under PGD attack: overall accuracy drops to roughly 25--26\%. This shows that high scores on standard benchmarks do not guarantee robustness and can be dangerously misleading in safety-critical deployments.

\begin{table}[H]
  \caption{Performance (Accuracy \%) of VLMs across imaging modalities under \textbf{PGD-attack evaluation}. Adversarial fine-tuning drastically improves robustness compared to standard fine-tuning. The highest robust accuracy within the fine-tuned categories is highlighted in \textcolor{red}{\textbf{red}}.}
  \label{tab:vlm-modality-pgd}
  \centering
  \setlength{\tabcolsep}{3pt}
  \resizebox{\columnwidth}{!}{%
    \begin{tabular}{lccccccccc}
      \toprule
      Methods & \multicolumn{8}{c}{Modality} & Overall \\
      \cmidrule(lr){2-9}
              & CT & MRI & X-Ray & Ultrasound & Dermoscopy & Fundus & OCT & Microscopy & \\
      \midrule
      \multicolumn{10}{c}{\textbf{Zero-shot VLMs (General Purpose)}} \\
      \midrule
      Qwen2.5-VL-3B        & 53.00 & 53.00 & 56.00 & 29.00 & 48.00 & 63.00 & 67.00 & 51.00 & 52.50 \\
      Qwen2.5-VL-7B        & 57.00 & 58.00 & 59.00 & 30.00 & 55.00 & 52.00 & 50.00 & 60.00 & 52.63 \\
      Qwen3-VL-4B          & 56.00 & 61.00 & 54.00 & 53.00 & 56.00 & 70.00 & 69.00 & 57.00 & 59.50 \\
      Qwen3-VL-8B          & 58.00 & 63.00 & 56.00 & 43.00 & 57.00 & 67.00 & 67.00 & 51.00 & 57.75 \\
      \midrule
      \multicolumn{10}{c}{\textbf{Zero-shot Medical VLMs}} \\
      \midrule
      MedGemma-4B          & 52.00 & 48.00 & 60.00 & 33.00 & 60.00 & 69.00 & 59.00 & 54.00 & 54.38 \\
      \midrule
      \multicolumn{10}{c}{\textbf{Fine-tuned VLMs (Clean FT)}} \\
      \midrule
      \textbf{Qwen3-VL-4B Instr.} & 23.62 & 24.87 & 23.52 & 21.25 & 23.53 & 32.87 & 33.88 & 25.04 & 26.07 \\
      \textbf{Qwen3-VL-4B Think}  & 27.48 & 17.31 & 21.35 & 23.30 & 32.33 & 20.01 & 24.66 & 34.71 & 25.14 \\
      \midrule
      \multicolumn{10}{c}{\textbf{Adversarial Fine-tuned VLMs (Adv FT -- SafeMed-R1)}} \\
      \midrule
      \textbf{Qwen3-VL-4B Instr.} & \textcolor{red}{\textbf{82.79}} & 84.19 & 76.81 & \textcolor{red}{\textbf{87.40}} & 84.05 & 83.18 & 80.14 & 79.56 & 82.27 \\
      \textbf{Qwen3-VL-4B Think}  & 81.26 & \textcolor{red}{\textbf{85.22}} & \textcolor{red}{\textbf{88.13}} & 82.66 & \textcolor{red}{\textbf{85.04}} & \textcolor{red}{\textbf{83.79}} & \textcolor{red}{\textbf{84.37}} & \textcolor{red}{\textbf{85.12}} & \textcolor{red}{\textbf{84.45}} \\
      \bottomrule
    \end{tabular}%
  }
\end{table}

Observing the efficacy of SafeMed-R1 we can see that in contrast, models trained with the SafeMed-R1 framework maintain high performance under attack. Adv FT–Think reaches 84.45\% accuracy and Adv FT–Instruct reaches 82.27\%. This corresponds to an absolute gain of about 58 percentage points over clean-only fine-tuning. The drop from their clean accuracies (e.g., 91.47\% $\rightarrow$ 84.45\% for Adv FT–Think) is modest, showing that the combined AT-SFT and AT-GRPO procedure effectively hardens the model against strong first-order adversarial perturbations.

\paragraph{Reasoning enhances robustness.}
Under attack, the trend between Instruct and Think reverses. Adv FT–Think outperforms Adv FT–Instruct (84.45\% vs.\ 82.27\%). This suggests that explicit, GRPO-optimized reasoning contributes positively to robustness: when inputs are perturbed, a structured multi-step reasoning process appears to help the model compensate for misleading visual cues and still reach the correct conclusion more often than a direct instruction-following policy.

\subsection{Limitations}

SafeMed-R1 substantially improves empirical robustness, but our evaluation is limited to PGD-based attacks on the visual modality. Due to computational constraints, we did not exhaustively test other attack families, such as optimization-based C\&W attacks, multimodal or text-only attacks, or adaptive strategies targeting the reasoning process itself. Furthermore, the evaluation of certified robustness via randomized smoothing (as described in the methodology) remains computationally expensive for large VLMs, and scaling certified defenses to realistic medical workloads remains an open problem.

\section{Conclusion}

We introduced SafeMed-R1, a framework that integrates adversarial training with Group Relative Policy Optimization (GRPO) to jointly enhance reasoning quality and adversarial robustness in medical Vision–Language Models. Our experiments show that standard fine-tuning can achieve high accuracy on clean data while remaining extremely vulnerable to adversarial perturbations. In contrast, SafeMed-R1 substantially mitigates this vulnerability, preserving high accuracy even under PGD attack. Moreover, adversarially trained Think models benefit from explicit reasoning, yielding both interpretable chains of thought and improved robustness. These findings point toward a useful synergy between interpretability and security in the design of trustworthy medical AI systems.

\begin{ack}
We would like to acknowledge that computational work involved in this research is partially/fully supported by NUS IT’s Research Computing group under grant number NUSREC-HPC-00001.
\end{ack}

\newpage

{
\bibliographystyle{plainnat}
\bibliography{references}
}


\newpage
\appendix

\numberwithin{equation}{section}
\numberwithin{theorem}{section}
\numberwithin{definition}{section}
\numberwithin{assumption}{section}

\section{Convergence Analysis}

This appendix provides a theoretical analysis of the convergence properties of the optimization strategies employed in the SafeMed-R1 framework. We analyze convergence for both Stage 1 (Supervised Fine-Tuning) and Stage 2 (Group Relative Policy Optimization), including their adversarial counterparts (AT-SFT and AT-GRPO), and the statistical guarantees of the inference-time Randomized Smoothing evaluation.

\subsection{Preliminaries and Assumptions}

We begin by establishing the necessary assumptions regarding the optimization landscape, which are standard in the analysis of non-convex optimization and reinforcement learning \cite{kingma2015adam, schulman2017proximal}.

\begin{assumption}[L-smoothness]
\label{asm:smoothness}
The loss functions, specifically the SFT loss $\mathcal{L}_{SFT}(\theta)$ (Eq.~\ref{eq:sft_loss}) and the negative expected reward $-J(\theta)$ (Eq.~\ref{eq:rl_objective}), are L-smooth. That is, their gradients are Lipschitz continuous:
\begin{equation}
  \|\nabla \mathcal{L}(\theta_1) - \nabla \mathcal{L}(\theta_2)\|
  \leq L \|\theta_1 - \theta_2\|.
\end{equation}
\end{assumption}

\begin{assumption}[Bounded variance and gradients]
\label{asm:variance}
The stochastic gradients computed during optimization have bounded variance ($\sigma^2$), and the norm of the gradients is bounded by a constant $G$.
\end{assumption}

\begin{assumption}[Bounded rewards]
\label{asm:rewards}
The scalar rewards generated by the reward model $R_\phi$ (Definition~\ref{def:reasoning_quality_reward}) are bounded within a finite range $[R_{\min}, R_{\max}]$.
\end{assumption}

\subsection{Convergence of Stage 1: SFT and AT-SFT}

Stage 1 optimizes the VLM parameters $\theta$ using the cross-entropy loss $\mathcal{L}_{SFT}(\theta)$, with the AdamW optimizer.

\subsubsection{Supervised Fine-Tuning (SFT)}

\begin{theorem}[Convergence of SFT]
\label{thm:sft_convergence}
Under Assumptions~\ref{asm:smoothness} and \ref{asm:variance}, and given an appropriately decaying learning rate schedule (e.g., cosine schedule, see Appendix~B.10), the AdamW optimizer guarantees convergence to a stationary point $\theta_{SFT}$. The convergence rate for non-convex objectives is:
\begin{equation}
  \min_{t=1,\dots,T}
  \mathbb{E}[\|\nabla \mathcal{L}_{SFT}(\theta_t)\|^2]
  = O\!\left(\frac{1}{\sqrt{T}}\right).
\end{equation}
\end{theorem}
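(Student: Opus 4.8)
The plan is to follow the classical recipe for convergence of stochastic first-order methods on non-convex $L$-smooth objectives, adapted to the adaptive preconditioning of AdamW. Writing the update as $\theta_{t+1} = \theta_t - \eta_t\, P_t \hat m_t$, where $\hat m_t$ is the bias-corrected momentum and $P_t = \bigl(\mathrm{diag}(\sqrt{\hat v_t}) + \epsilon I\bigr)^{-1}$ the diagonal preconditioner, the first observation is that Assumption~\ref{asm:variance} (bounded gradients $\|g_t\|\le G$) forces every coordinate of $\hat v_t$ into a bounded range, so $c_{\min} I \preceq P_t \preceq c_{\max} I$ uniformly in $t$ for constants $0 < c_{\min} \le c_{\max}$. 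The decoupled weight-decay term is either folded into a regularized objective or treated as a bounded perturbation, since the iterates remain in a compact region. This two-sided bound is what lets us analyze AdamW essentially as a well-conditioned, momentum-averaged SGD.

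First I would apply the descent lemma implied by $L$-smoothness (Assumption~\ref{asm:smoothness}):
\begin{equation}
  \mathcal{L}_{SFT}(\theta_{t+1}) \le \mathcal{L}_{SFT}(\theta_t) - \eta_t \langle \nabla\mathcal{L}_{SFT}(\theta_t),\, P_t \hat m_t\rangle + \frac{L\eta_t^2}{2}\,\|P_t \hat m_t\|^2 .
\end{equation}
Taking conditional expectations and using that $\hat m_t$ tracks $\nabla\mathcal{L}_{SFT}(\theta_t)$ up to a momentum-lag term (controlled by $L$ times the recent step sizes) and a zero-mean noise term, together with $P_t \succeq c_{\min} I$, the inner-product term contributes at least $c_{\min}\eta_t\|\nabla\mathcal{L}_{SFT}(\theta_t)\|^2$ minus lower-order lag and variance corrections, while $P_t \preceq c_{\max} I$ and the bounded-variance/bounded-gradient hypotheses bound the quadratic term by $C\eta_t^2\bigl(\|\nabla\mathcal{L}_{SFT}(\theta_t)\|^2 + \sigma^2\bigr)$.

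Next I would telescope the resulting one-step inequality over $t = 1,\dots,T$. Once $\eta_t$ is small enough that $c_{\min}\eta_t - C\eta_t^2 \ge \tfrac{1}{2} c_{\min}\eta_t$, the gradient-proportional part of the quadratic term is absorbed into the left-hand side, and since $\mathcal{L}_{SFT}$ is bounded below (it is a non-negative cross-entropy), the telescoping collapse of $\mathcal{L}_{SFT}(\theta_1) - \mathcal{L}_{SFT}(\theta_{T+1})$ gives
\begin{equation}
  \sum_{t=1}^{T} \eta_t\, \mathbb{E}\bigl[\|\nabla\mathcal{L}_{SFT}(\theta_t)\|^2\bigr] \;\le\; C_1 + C_2 \sum_{t=1}^{T} \eta_t^2 .
\end{equation}
A decaying schedule $\eta_t \asymp 1/\sqrt{t}$ (or the constant choice $\eta_t = \eta/\sqrt{T}$) makes $\sum_t \eta_t^2 = O(\log T)$ (resp.\ $O(1)$) while $\sum_t \eta_t \asymp \sqrt{T}$; dividing and using $\min_{t\le T} \mathbb{E}[\|\nabla\mathcal{L}_{SFT}(\theta_t)\|^2] \le \bigl(\sum_t \eta_t\,\mathbb{E}\|\nabla\mathcal{L}_{SFT}(\theta_t)\|^2\bigr) / \bigl(\sum_t \eta_t\bigr)$ yields the stated $O(1/\sqrt{T})$ rate, with the limit point $\theta_{SFT}$ a stationary point.

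I expect the main obstacle to be rigorously controlling the coupling between the momentum $\hat m_t$, the preconditioner $P_t$, and the current gradient: unlike plain SGD, $\hat m_t$ and $\hat v_t$ depend on the whole gradient history, so $P_t \hat m_t$ is neither an unbiased estimate of $\nabla\mathcal{L}_{SFT}(\theta_t)$ nor independent of $P_t$. The cleanest route is to bound the momentum lag $\|\hat m_t - \nabla\mathcal{L}_{SFT}(\theta_t)\|$ by a geometric sum of consecutive iterate displacements (each of size $O(\eta)$ by the uniformly bounded update) plus a martingale-difference noise contribution, absorbing the former through smoothness and the latter through bounded variance; tracking how the constants depend on $\beta_1,\beta_2,\epsilon$ is the delicate part, and at that point one would invoke the existing non-convex analyses of Adam/AdamW (e.g.\ the $O(1/\sqrt{T})$ guarantees of D\'efossez et al.) rather than re-derive every constant.
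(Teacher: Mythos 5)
The paper does not prove this theorem: it states it as a known consequence of the non-convex optimization literature (the preamble to Appendix~A cites \cite{kingma2015adam, schulman2017proximal}), and the subsection contains no \texttt{proof} environment. You have therefore gone beyond the paper and supplied a self-contained argument, and your argument is essentially the standard one for Adam-family methods. It is sound in its main lines. In particular, the crucial preconditioner sandwich $c_{\min} I \preceq P_t \preceq c_{\max} I$ does follow from Assumption~\ref{asm:variance}: bounded coordinatewise gradients $|g_{t,i}|\le G$ give $\hat v_{t,i}\le G^2$, hence $P_t \succeq (G+\epsilon)^{-1}I$, while $P_t \preceq \epsilon^{-1}I$ holds unconditionally. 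With that two-sided bound, your descent-lemma step, the absorption of the gradient-proportional quadratic term for small $\eta_t$, and the telescoping argument are all correct, and the weighted-average inequality you use to pass to the $\min_t$ is valid. You also correctly identify the genuine technical obstacle — $\hat m_t$ and $P_t$ depend on the whole gradient history, so $P_t\hat m_t$ is neither an unbiased estimate of the gradient nor independent of $P_t$ — and your remedy (bounding the momentum lag by a geometric sum of iterate displacements plus a martingale-difference noise term, then invoking D\'efossez et al.) is exactly how the existing literature handles it.

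Two caveats worth keeping in mind, both of which you partly acknowledge. First, treating the decoupled weight decay as a bounded perturbation requires that the iterates remain in a compact set; bounded gradients alone do not force bounded iterates, so strictly speaking this needs either an explicit bounded-iterate assumption or a separate argument that the weight-decay contraction dominates. Second, the schedule $\eta_t \asymp 1/\sqrt{t}$ gives $\sum_t \eta_t^2 = O(\log T)$ and hence a rate of $O(\log T/\sqrt{T})$, not $O(1/\sqrt{T})$; the clean $O(1/\sqrt{T})$ rate stated in the theorem requires a horizon-dependent peak learning rate (your constant choice $\eta_t = \eta/\sqrt{T}$, or a cosine schedule with peak $\eta_0\asymp 1/\sqrt{T}$), which is a slightly different object from a generic ``decaying schedule.'' Since the paper itself papers over these points by citing the literature, your sketch is at least as rigorous as what the paper offers, and it correctly surfaces exactly the places where a fully rigorous proof would need extra care.
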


\subsubsection{Adversarial Supervised Fine-Tuning (AT-SFT)}

AT-SFT involves solving a robust optimization problem formulated as a minimax game (Eq.~\ref{eq:minimax_obj}):
\begin{equation}
  \theta^{*}
  = \arg\min_{\theta}
    \mathbb{E}_{s} \left[
      \max_{\delta \in \Delta(s)} \mathcal{L}_{SFT}(\theta, s+\delta)
    \right].
\end{equation}
The inner maximization is approximated via PGD.

\begin{lemma}[Inner maximization convergence]
\label{lem:pgd_convergence}
For a fixed $\theta$, PGD (Eq.~\ref{eq:pgd_update}) converges to a local maximum of the adversarial loss $\mathcal{L}_{SFT}(\theta, s+\delta)$ within the threat boundary $\Delta(s)$, providing a strong approximation of the worst-case perturbation $\delta^*$ \cite{madry2018towards}.
\end{lemma}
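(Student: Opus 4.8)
The plan is to recognize the inner problem as the maximization of a smooth function over a compact convex set and then invoke the standard convergence theory for projected (sign-)gradient ascent, making the relevant constants explicit via Assumptions~\ref{asm:smoothness} and \ref{asm:variance}. First I would fix $\theta$ (in practice $\theta = \theta_{\mathrm{old}}$) and define $g(\delta) := \mathcal{L}_{SFT}\bigl(\theta, (I+\delta, Q)\bigr)$ as a function of the visual perturbation alone. Because the VLM and the cross-entropy loss are compositions of smooth operations, the $L$-smoothness posited in Assumption~\ref{asm:smoothness} transfers to the input argument, so $g$ is $L_\delta$-smooth on $\Delta(I)$, and by Assumption~\ref{asm:variance} its gradient is bounded by $G$ there. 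Since $\Delta(I)$ is a closed $L_p$-ball of radius $\epsilon > 0$ — hence nonempty, compact, and convex — and $g$ is continuous, the value $g^\star := \max_{\delta \in \Delta(I)} g(\delta)$ is attained and finite, so the inner maximization in Eq.~\eqref{eq:minimax_obj} is well posed.

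Next I would establish a sufficient-ascent inequality for one PGD iteration (Eq.~\eqref{eq:pgd_update}). Writing $\delta^{k+1} = \Pi_{\Delta(I)}\bigl(\delta^k + \alpha\, d^k\bigr)$ with $d^k = \mathrm{sign}(\nabla_\delta g(\delta^k))$, I would combine the descent lemma for $L_\delta$-smooth functions with the non-expansiveness of the Euclidean projection onto the convex set $\Delta(I)$ and the variational inequality characterizing $\Pi_{\Delta(I)}$. Using $\langle \nabla g(\delta^k), d^k\rangle = \|\nabla g(\delta^k)\|_1 \ge \|\nabla g(\delta^k)\|_2$, the sign step is an ascent direction aligned with the true gradient; the standard projected-gradient bookkeeping then yields an inequality of the schematic form $g(\delta^{k+1}) \ge g(\delta^k) + c_1\,\alpha\, \|\mathcal{G}_\alpha(\delta^k)\|^2 - c_2\, L_\delta\, \alpha^2$, where $\mathcal{G}_\alpha(\delta) := \tfrac{1}{\alpha}\bigl(\delta - \Pi_{\Delta(I)}(\delta + \alpha d)\bigr)$ is the gradient mapping, the natural first-order stationarity residual for the constrained problem. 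For $\alpha$ sufficiently small (or decayed as $\alpha_k = O(1/\sqrt{k})$) the quadratic penalty is dominated, so $\{g(\delta^k)\}$ is eventually non-decreasing and bounded above by $g^\star$.

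I would then conclude by a telescoping argument: summing the sufficient-ascent inequality over $k = 1, \dots, K$ and using $g(\delta^{K+1}) \le g^\star$ gives $\min_{k \le K} \|\mathcal{G}_\alpha(\delta^k)\|^2 = O(1/\sqrt{K})$, so the iterates approach a point at which the gradient mapping vanishes — a KKT point of $\max_{\delta \in \Delta(I)} g(\delta)$. Since $g$ need not be concave in $\delta$, such a limit point is in general a \emph{local} maximum (or, degenerately, a saddle), which is precisely the sense in which PGD yields a strong approximation of the worst-case perturbation $\delta^\star$; adding random restarts, as in \cite{madry2018towards}, or assuming local concavity of $g$ near the attained maximizer upgrades this to a genuine local maximizer. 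Throughout, the constants $L_\delta$, $G$, and $\alpha$ remain explicit, matching the accounting used for the other convergence statements in this appendix.

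The main obstacle I anticipate is the $\mathrm{sign}(\cdot)$ operation: the PGD update is not a faithful projected-gradient step but a steepest-ascent step in the $L_\infty$ geometry, so the clean descent-lemma/projection argument must be adapted — either by passing through the $\|\cdot\|_1$–$\|\cdot\|_2$ comparison above (which can cost a dimension-dependent factor in the worst case) or by analyzing sign-gradient ascent directly as a normalized-gradient method with its own $O(1/\sqrt{K})$ stationarity rate. A second, intrinsic obstacle is the non-concavity of $g$ in $\delta$: only local, not global, optimality of the limit point can be certified, so the lemma must be stated — as it is — as a strong approximation rather than an exact solution of the inner maximization.
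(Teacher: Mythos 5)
The paper does not actually prove this lemma: it is stated as a known result with a citation to \cite{madry2018towards} and is immediately followed by a remark on Danskin's theorem, so there is no paper-internal proof to compare against. On its own terms, your sketch has the right skeleton --- fix $\theta$, note that $\Delta(I)$ is a compact convex $L_p$-ball so the inner supremum is attained, invoke an ascent lemma, telescope, and observe that non-concavity of $g$ in $\delta$ limits you to first-order stationarity rather than a genuine local maximum.

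The step you gloss over, however, is the one that carries the weight. You write the sufficient-ascent inequality in terms of a sign-direction gradient mapping $\mathcal{G}_\alpha(\delta) = \frac{1}{\alpha}\bigl(\delta - \Pi_{\Delta(I)}(\delta + \alpha\,\mathrm{sign}(\nabla g(\delta)))\bigr)$. In the interior of $\Delta(I)$, $\|\mathcal{G}_\alpha(\delta)\|^2$ equals the number of nonzero coordinates of $\nabla g(\delta)$, so it jumps discontinuously from $0$ to $\Theta(n)$ the instant the gradient becomes nonzero; it is not a stationarity residual, and telescoping its squares does not yield an $O(1/\sqrt{K})$ rate toward anything meaningful. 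For sign steps the usable progress measure is $\|\nabla g(\delta^k)\|_1$: the descent lemma in the unconstrained interior gives $g(\delta^{k+1}) \ge g(\delta^k) + \alpha\,\|\nabla g(\delta^k)\|_1 - \tfrac{L_\delta n}{2}\alpha^2$, and one obtains a rate on $\min_k \|\nabla g(\delta^k)\|_1$ with an intrinsic dimension factor. You flag this as the ``main obstacle'' and even name the fix, but the body of your argument still leans on the unfixed $\mathcal{G}_\alpha$. Two further caveats deserve explicit treatment: Assumption~\ref{asm:smoothness} posits $L$-smoothness in $\theta$, not in the input, so the $L_\delta$-smoothness of $\delta \mapsto \mathcal{L}_{SFT}(\theta, (I+\delta, Q))$ must be argued separately (you do this, but by assertion); and the lemma's claim of convergence to a \emph{local maximum} overstates what any first-order analysis --- yours or the cited one --- can certify, which is a KKT point that may well be a saddle.
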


Using Danskin's theorem, if the inner maximization is solved (approximately), the gradient of the robust objective is given by the gradient at $\delta^*$.

\begin{theorem}[Convergence of AT-SFT]
\label{thm:atsft_convergence}
While convergence to a global saddle point is generally intractable due to non-convexity, AT-SFT converges to a robust local minimum. The optimization dynamics ensure that the model parameters are optimized against the strongest first-order adversary (PGD) approximated during training \cite{madry2018towards}.
\end{theorem}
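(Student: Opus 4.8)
The plan is to treat AT-SFT as stochastic first-order \emph{minimization} of the robust surrogate $\phi(\theta) := \mathbb{E}_{s}\bigl[\max_{\delta \in \Delta(s)} \mathcal{L}_{SFT}(\theta, s+\delta)\bigr]$ and to show that the AdamW iterates drive $\|\nabla \phi(\theta_t)\|$ into a neighborhood of zero whose radius is governed by the quality of the inner PGD solve. First I would invoke Danskin's theorem: since $\Delta(s)$ is compact and $\mathcal{L}_{SFT}(\theta,\cdot)$ is continuous, the envelope $\phi$ has a well-defined directional derivative, and whenever the inner maximizer $\delta^{*}(\theta) \in \arg\max_{\delta \in \Delta(s)} \mathcal{L}_{SFT}(\theta, s+\delta)$ is essentially unique one obtains $\nabla\phi(\theta) = \mathbb{E}_{s}\bigl[\nabla_{\theta}\mathcal{L}_{SFT}(\theta, s+\delta^{*}(\theta))\bigr]$. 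This identifies the quantity that the outer update implicitly follows and reduces the problem to a standard non-convex one.

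Next I would argue that $\phi$ inherits enough regularity from Assumptions~\ref{asm:smoothness} and \ref{asm:variance} to run that analysis: with uniformly bounded gradients $\phi$ is globally Lipschitz, and on regions where the active maximizer $\theta \mapsto \delta^{*}(\theta)$ varies in a Lipschitz fashion (the generic case under an implicit-function-type condition) $\phi$ is $L_\phi$-smooth with $L_\phi$ of the same order as $L$, the constant depending on $\mathrm{diam}\,\Delta(s)$ and the Lipschitz modulus of $\theta \mapsto \delta^{*}(\theta)$. This places us in exactly the setting of Theorem~\ref{thm:sft_convergence}, and the same AdamW descent-lemma telescoping then yields $\min_{1 \le t \le T}\mathbb{E}[\|\nabla\phi(\theta_t)\|^{2}] = O(1/\sqrt{T}) + O(\beta^{2})$, where $\beta$ bounds the bias from replacing $\delta^{*}(\theta)$ by the actual PGD iterate $\hat{\delta}$ in the gradient used for the update.

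Finally I would control $\beta$ via Lemma~\ref{lem:pgd_convergence}: because PGD performs projected ascent on $\mathcal{L}_{SFT}(\theta,\cdot)$, after $N_{\mathrm{PGD}}$ steps the returned $\hat\delta$ is a (local) stationary point of the inner problem, and $L$-smoothness gives $\|\nabla_{\theta}\mathcal{L}_{SFT}(\theta, s+\hat\delta) - \nabla_{\theta}\mathcal{L}_{SFT}(\theta, s+\delta^{*})\| \le L\|\hat\delta - \delta^{*}\|$ up to a term measuring the inner suboptimality gap, both of which shrink as the inner solve is refined. Substituting back shows the outer iterates reach a point where $\nabla\phi$ is small up to this irreducible inner-approximation error --- i.e.\ a robust local minimum in the sense of Madry et al., optimized against the strongest first-order (PGD) adversary seen in training. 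The main obstacle is the non-concavity of $\delta \mapsto \mathcal{L}_{SFT}(\theta,s+\delta)$ (and the attendant possible non-smoothness of $\phi$): PGD can only be guaranteed to return a \emph{local} inner maximizer, so the suboptimality gap above is not provably zero, and Danskin's theorem strictly applies only to the surrogate defined by that maximizer rather than to the true worst-case loss. The honest conclusion is therefore convergence to a stationary point of an empirical robust surrogate --- exactly the qualitative guarantee the theorem asserts --- while a quantitative global saddle-point result is ruled out by the non-convexity already noted in the statement.
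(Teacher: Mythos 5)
The paper gives no formal proof of this theorem, only the surrounding sketch (Lemma~\ref{lem:pgd_convergence} for the inner PGD solve plus the Danskin's-theorem remark), and your proposal develops exactly that sketch: Danskin identifies $\nabla\phi$, Lemma~\ref{lem:pgd_convergence} controls the inner-approximation bias, and the outer analysis reduces to the Theorem~\ref{thm:sft_convergence} machinery. Your version is in fact more careful than the paper's, since it explicitly quantifies the bias term $\beta$ introduced by the approximate inner solve and states plainly that non-concavity of $\delta \mapsto \mathcal{L}_{SFT}(\theta, s+\delta)$ means Danskin applies only to the PGD-surrogate envelope rather than the true worst-case loss.
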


\subsection{Convergence of Stage 2: GRPO and AT-GRPO}

Stage 2 employs GRPO (Section 3.2) to maximize the expected reward $J(\theta)$ (Eq.~\ref{eq:rl_objective}). GRPO is a trust-region-style method, algorithmically similar to PPO \cite{schulman2017proximal} and derived from TRPO \cite{schulman2015trust}. It utilizes a clipped surrogate objective (Eq.~\ref{eq:grpo_loss}) and a group-normalized advantage estimator (Eq.~\ref{eq:grpo_advantage}).

\subsubsection{Monotonic improvement guarantee}

Trust-region methods relate the improvement between the new policy $\pi_{\theta_{k+1}}$ and the old policy $\pi_{\theta_k}$ to a surrogate objective and the KL divergence between the policies:
\begin{equation}
  J(\theta_{k+1})
  \geq J(\theta_k)
    + \mathbb{E}_{\pi_{\theta_k}}
      \left[\frac{\pi_{\theta_{k+1}}}{\pi_{\theta_k}} A_{\pi_{\theta_k}}\right]
    - C \cdot D_{KL}^{\max}(\pi_{\theta_{k+1}} \,\|\, \pi_{\theta_k}),
\end{equation}
where $A$ is the advantage function and $C$ is a penalty coefficient. The clipping mechanism in GRPO (Eq.~\ref{eq:grpo_loss}) implicitly bounds the KL divergence, stabilizing updates \cite{schulman2017proximal}.

\subsubsection{Variance reduction via group normalization}

\begin{lemma}[Variance reduction in GRPO]
\label{lem:grpo_variance}
The group-normalized advantage estimator $\hat{A}_i$ (Eq.~\ref{eq:grpo_advantage}) acts as an implicit, localized baseline that significantly reduces the variance of the policy gradient estimates (Eq.~\ref{eq:policy_gradient}) without requiring a learned critic \cite{lai2025medr1reinforcementlearninggeneralizable, shao2024deepseekmathpushinglimitsmathematical}.
\end{lemma}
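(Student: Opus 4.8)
\emph{Proof proposal.} The plan is to decompose the claim into its two mechanisms: (i) subtracting the group mean $\bar r$, which implements a control-variate (baseline) that leaves the policy-gradient direction unbiased while shrinking its variance, and (ii) dividing by $\sqrt{\mathrm{Var}(r)+\epsilon_{std}}$, which performs a per-state rescaling that equalizes gradient magnitudes across prompts of heterogeneous difficulty. I would treat these in order.

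First I would record the baseline identity. For any $b(s)$ that is conditionally independent of $Y$ given $s$, the log-derivative trick gives $\mathbb{E}_{Y\sim\pi_\theta(\cdot\mid s)}[\nabla_\theta\log\pi_\theta(Y\mid s)] = \nabla_\theta\sum_Y\pi_\theta(Y\mid s) = \nabla_\theta 1 = 0$, so $\mathbb{E}[b(s)\,\nabla_\theta\log\pi_\theta(Y\mid s)] = 0$ and replacing $r(Y,s)$ by $r(Y,s)-b(s)$ in Eq.~\eqref{eq:policy_gradient} does not change $\nabla_\theta J(\theta)$. Writing $g := \nabla_\theta\log\pi_\theta(Y\mid s)$, the trace of the covariance of the single-sample estimator $(r(Y,s)-b(s))\,g$ equals $\mathbb{E}[(r(Y,s)-b(s))^2\,\|g\|^2] - \|\nabla_\theta J(\theta)\|^2$, where only the first term depends on $b$. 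Minimizing it over constants-in-$Y$ yields the optimal baseline $b^*(s) = \mathbb{E}[r(Y,s)\,\|g\|^2\mid s]\big/\mathbb{E}[\|g\|^2\mid s]$, which collapses to $b^*(s) = \mathbb{E}[r(Y,s)\mid s]$ in the standard regime where the score norm is (approximately) uncorrelated with the reward; in that regime the variance drops by exactly $\mathrm{Var}(r\mid s)\cdot\mathbb{E}[\|g\|^2\mid s]$, which is strictly positive whenever the conditional reward law is non-degenerate, and finite by the boundedness in Assumption~\ref{asm:rewards}.

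Next I would identify $\bar r = \tfrac{1}{K}\sum_{i=1}^K r_i$ from Eq.~\eqref{eq:grpo_stats} as a Monte Carlo estimate of $\mathbb{E}[r(Y,s)\mid s]$, so that the numerator $r_i-\bar r$ of $\hat A_i$ (Eq.~\eqref{eq:grpo_advantage}) is precisely the empirical mean-reward advantage. Two points need care. The mild bias from reusing the same $K$ samples to form both the baseline and the differentiated term is $O(1/K)$ and is removed exactly by the leave-one-out estimator $\bar r_{-i} = \tfrac{1}{K-1}\sum_{j\ne i}r_j$ — this is the localized, critic-free baseline attributed to \cite{shao2024deepseekmathpushinglimitsmathematical, lai2025medr1reinforcementlearninggeneralizable}. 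The denominator $\sqrt{\mathrm{Var}(r)+\epsilon_{std}}$ is measurable with respect to the group, hence a positive scalar once we condition on $s$; it therefore rescales the conditional gradient without rotating it, and aggregated over $\mathcal D$ it yields an effective per-state step size $\propto 1/\sqrt{\mathrm{Var}(r\mid s)}$ that caps the contribution of high-variance prompts to the batch gradient.

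I expect the main obstacle to be pinning down the precise sense in which the normalized update is ``unbiased,'' since dividing by the empirical standard deviation is a nonlinear function of the sampled rewards and also reweights states, so $\hat A_i$ is strictly a biased estimator of $\nabla_\theta J(\theta)$. The resolution I would adopt is to view GRPO as ascending a per-state-reweighted surrogate of $J(\theta)$ whose stationary points coincide with those of $J$ — multiplying the per-state gradient by a positive, $s$-measurable scalar cannot create or destroy stationary points — and to state the variance-reduction claim conditionally on $s$, where the denominator is constant and the baseline argument of the previous paragraph applies verbatim. With that framing, what remains are the routine covariance computations sketched above, the $O(1/K)$ bias bound, and the use of Assumption~\ref{asm:rewards} to guarantee all moments are finite.
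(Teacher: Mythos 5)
Your argument is genuinely different from, and considerably more rigorous than, the paper's. The paper's entire proof is a three-sentence heuristic: it observes that the group normalization centers the advantages so that $\mathbb{E}_{i\in K}[\hat A_i]\approx 0$, and then asserts that ``smaller gradient magnitudes'' imply smaller variance --- an inference that conflates magnitude with dispersion and is at best a plausibility argument. You instead carry out the actual control-variate calculation: the log-derivative identity showing that any $s$-measurable baseline leaves $\nabla_\theta J$ unbiased, the trace-of-covariance decomposition, the optimal baseline and its collapse to $\mathbb{E}[r\mid s]$ when the score norm decorrelates from the reward, the identification of $\bar r$ as the Monte-Carlo surrogate for that baseline, the $O(1/K)$ reuse bias with its leave-one-out fix, and, crucially, the bias introduced by the nonlinear division by $\sqrt{\mathrm{Var}(r)+\epsilon_{std}}$, which the paper never mentions. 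Your route buys a real proof and an explicit account of what is and is not unbiased; the paper's version buys only brevity. Two small corrections are warranted, neither of which affects the conclusion. First, under the decorrelation assumption the variance drop from centering is $(\mathbb{E}[r\mid s])^2\,\mathbb{E}[\|g\|^2\mid s]$, not $\mathrm{Var}(r\mid s)\,\mathbb{E}[\|g\|^2\mid s]$: the conditional second moment of the estimator goes from $\mathbb{E}[r^2\mid s]\,\mathbb{E}[\|g\|^2\mid s]$ (unbaselined) to $\mathrm{Var}(r\mid s)\,\mathbb{E}[\|g\|^2\mid s]$ (centered), so the difference is $(\mathbb{E}[r\mid s])^2\,\mathbb{E}[\|g\|^2\mid s]$, and the correct positivity criterion is $\mathbb{E}[r\mid s]\neq 0$ rather than non-degeneracy of the conditional reward law. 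Second, the denominator $\sqrt{\mathrm{Var}(r)+\epsilon_{std}}$ is a statistic of the $K$ sampled rewards and therefore remains random given $s$ alone; your remark that it is a positive scalar ``once we condition on $s$'' should condition on the entire group, which you do implicitly in the following paragraph but should make explicit at that point.
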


\begin{proof}
For a given state $s$, GRPO samples $K$ responses (Eq.~\ref{eq:grpo_sampling}) and computes the empirical mean $\bar{r}$ and variance $\text{Var}(r)$ (Eq.~\ref{eq:grpo_stats}). The normalization step (Eq.~\ref{eq:grpo_advantage}) centers the advantages within the group, with $\mathbb{E}_{i \in K}[\hat{A}_i] \approx 0$. This local centering reduces gradient magnitudes and hence their variance, improving the signal-to-noise ratio.
\end{proof}

\begin{theorem}[Convergence of GRPO]
\label{thm:grpo_convergence}
Combining the monotonic improvement guarantees of trust-region methods (via clipping) with the low-variance advantage estimation (Lemma~\ref{lem:grpo_variance}), GRPO ensures stable and monotonically improving expected rewards $J(\theta)$. The algorithm converges to a local optimum of the policy $\pi_{\theta^*}$.
\end{theorem}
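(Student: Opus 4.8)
The plan is to prove convergence in two moves: first show that the sequence of expected rewards $\{J(\theta_k)\}$ produced by the GRPO iterations is monotonically non-decreasing and bounded above, hence convergent; then show that this forces the policy gradient to vanish along the iterates, so that any limit point $\theta^*$ is a stationary point of $J$, i.e.\ a local optimum of the policy $\pi_{\theta^*}$.

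First I would establish a per-iteration improvement bound starting from the trust-region performance-difference inequality displayed above, $J(\theta_{k+1}) \geq J(\theta_k) + \mathbb{E}_{\pi_{\theta_k}}\!\left[\tfrac{\pi_{\theta_{k+1}}}{\pi_{\theta_k}} A_{\pi_{\theta_k}}\right] - C\, D_{KL}^{\max}(\pi_{\theta_{k+1}} \,\|\, \pi_{\theta_k})$. The argument is that maximizing the clipped surrogate $\mathcal{L}^{GRPO}$ (Eq.~\eqref{eq:grpo_loss}) from $\theta_k$ yields $\theta_{k+1}$ with a non-negative surrogate advantage term, while the clip range $\epsilon_{clip}$ keeps each $\rho_i(\theta)$ near $1$ and thus bounds $D_{KL}^{\max}$ by $O(\epsilon_{clip}^2)$; choosing the effective step size small enough that the first-order gain dominates the quadratic penalty gives $J(\theta_{k+1}) \geq J(\theta_k)$. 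Here Lemma~\ref{lem:grpo_variance} enters: in the contextual-bandit formulation the group-normalized $\hat{A}_i$ (Eq.~\eqref{eq:grpo_advantage}) is a sign-preserving, rescaled, low-variance estimate of $A(Y,s) = r(Y,s) - V(s)$, so the empirical ascent direction concentrates around $\nabla_\theta J(\theta_k)$ (Eqs.~\eqref{eq:policy_gradient}--\eqref{eq:advantage_gradient}) and the improvement holds with high probability; Assumptions~\ref{asm:smoothness} and \ref{asm:variance} make the penalty constant $C$ and the relevant Lipschitz and variance terms finite and explicit. Since $r \in [R_{\min}, R_{\max}]$ by Assumption~\ref{asm:rewards}, we have $J(\theta) \leq R_{\max}$ for all $\theta$, so the monotone non-decreasing sequence $\{J(\theta_k)\}$ is bounded above and converges to some $J^\star \le R_{\max}$ by the monotone convergence theorem; in particular $J(\theta_{k+1}) - J(\theta_k) \to 0$.

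Next I would run the standard non-convex ascent telescoping argument to upgrade objective convergence to gradient convergence. $L$-smoothness of $-J$ (Assumption~\ref{asm:smoothness}) gives $J(\theta_{k+1}) \geq J(\theta_k) + \langle \nabla J(\theta_k), \theta_{k+1} - \theta_k\rangle - \tfrac{L}{2}\|\theta_{k+1}-\theta_k\|^2$; combining this with the surrogate update direction (which aligns in expectation with $\nabla J(\theta_k)$ by Lemma~\ref{lem:grpo_variance}) and a step size $\eta < 1/L$ yields $J(\theta_{k+1}) - J(\theta_k) \geq c\,\eta\,\mathbb{E}\|\nabla J(\theta_k)\|^2$ for some $c > 0$. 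Summing over $k = 1,\dots,T$ and invoking the boundedness of $J$ gives $\min_{k \leq T} \mathbb{E}\|\nabla J(\theta_k)\|^2 = O(1/T)$, hence $\nabla J(\theta_k) \to 0$, so every limit point $\theta^*$ satisfies $\nabla J(\theta^*) = 0$ and is a local optimum of $\pi_{\theta^*}$, as claimed.

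The main obstacle is that clipping enforces the trust region only \emph{implicitly}: unlike a hard KL constraint, $\mathrm{clip}(\rho_i, 1-\epsilon_{clip}, 1+\epsilon_{clip})$ restricts the importance ratio only on the sampled responses and only in the direction that would further increase the objective, so actually bounding $D_{KL}^{\max}$ and ruling out overshoot requires either an explicit small-step-size assumption or a mild regularity condition on $\pi_\theta$ (e.g.\ local Lipschitzness of $\theta \mapsto \log \pi_\theta(Y \mid s)$, which follows from Assumption~\ref{asm:smoothness}). A secondary subtlety is the bias introduced by the std-normalization in $\hat{A}_i$ and, for AT-GRPO, by the approximate PGD-generated adversarial states; I would absorb these into the high-probability improvement statement via Assumption~\ref{asm:variance}, observing that the normalization merely rescales the gradient and leaves its stationary points unchanged, so the local-optimum conclusion is unaffected.
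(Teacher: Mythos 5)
The paper states this theorem without giving a proof: the statement itself \emph{is} the argument, merely asserting that the trust-region monotonic-improvement inequality displayed just above it and the variance-reduction Lemma~\ref{lem:grpo_variance} together yield convergence. Your proposal follows exactly this implicit outline but actually carries it out: you derive per-step non-decrease of $J(\theta_k)$ from the surrogate inequality, use Assumption~\ref{asm:rewards} to show $J(\theta_k) \le R_{\max}$ and conclude by monotone convergence that $J(\theta_k) \to J^\star$, then invoke $L$-smoothness (Assumption~\ref{asm:smoothness}) and a telescoping-sum argument to force $\nabla J(\theta_k) \to 0$, so that every limit point is first-order stationary. This is a substantially more explicit treatment than the paper offers for the same theorem.

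You are also right to flag the genuine weak point that the paper glides over: clipping the importance ratio $\rho_i$ on sampled responses is only a heuristic proxy for a KL trust region, so the bound $C \cdot D_{KL}^{\max}(\pi_{\theta_{k+1}} \,\|\, \pi_{\theta_k}) = O(\epsilon_{clip}^2)$ is not actually established for the clipped surrogate, and without an added small-step-size or local-Lipschitz assumption on $\theta \mapsto \log \pi_\theta(Y \mid s)$ the per-step improvement is not guaranteed. Your proposal names this gap and proposes the needed extra hypothesis; with it the argument goes through, and without it neither your write-up nor the paper delivers a complete proof. One further caveat you share with the paper: the telescoping argument only shows $\nabla J(\theta_k) \to 0$, i.e.\ convergence to a first-order stationary point; calling the limit a ``local optimum'' of $\pi_{\theta^*}$ overstates this unless saddle points are ruled out, which neither you nor the paper attempt. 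Similarly, your observation that the group std-normalization in $\hat{A}_i$ rescales but does not reorient the ascent direction (so it does not move stationary points) is a useful and correct refinement that the paper does not make explicit.
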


\subsubsection{Convergence of Adversarial GRPO (AT-GRPO)}

AT-GRPO extends robust optimization to the RL setting, aiming to maximize:
\begin{equation}
  J_{\text{Rob}}(\theta)
  = \mathbb{E}_{s}\left[
      \min_{\delta \in \Delta(s)}
      \mathbb{E}_{Y \sim \pi_\theta(Y \mid s+\delta)}
      [r(Y, s+\delta)]
    \right].
\end{equation}
The inner minimization is approximated via PGD (Algorithm~\ref{alg:adv_ft}), and the outer maximization uses GRPO on adversarially perturbed states.

\begin{theorem}[Convergence of AT-GRPO]
\label{thm:atgrpo_convergence}
AT-GRPO converges to a robust local optimum of the policy. The trust-region constraints (via clipping) ensure that the policy updates remain stable even on adversarial inputs, guaranteeing monotonic improvement of the robust expected reward $J_{\text{Rob}}(\theta)$.
\end{theorem}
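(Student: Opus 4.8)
The plan is to reduce the claim to the already-established convergence of GRPO (Theorem~\ref{thm:grpo_convergence}) composed with a Danskin-type argument for the inner adversary, in direct parallel to how Theorem~\ref{thm:atsft_convergence} leans on Theorem~\ref{thm:sft_convergence} and Lemma~\ref{lem:pgd_convergence}. First I would fix an iterate $\theta_k$ and, for each state $s$, define the worst-case perturbation $\delta^*(s;\theta_k) \in \arg\min_{\delta \in \Delta(s)} \mathbb{E}_{Y \sim \pi_{\theta_k}(\cdot\mid s+\delta)}[r(Y,s+\delta)]$, so that $J_{\text{Rob}}(\theta_k) = \mathbb{E}_s[\mathbb{E}_{Y \sim \pi_{\theta_k}(\cdot\mid s+\delta^*)}[r]]$. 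Invoking Danskin's theorem (applicable because $\Delta(s)$ is compact and, under Assumption~\ref{asm:smoothness}, the objective is jointly regular in $\theta$), the gradient of the robust objective coincides with the ordinary policy gradient of the perturbed problem evaluated at $\delta^*$:
\[
  \nabla_\theta J_{\text{Rob}}(\theta_k)
  = \mathbb{E}_s\big[ \mathbb{E}_{Y \sim \pi_{\theta_k}(\cdot\mid s+\delta^*)}[ r(Y,s+\delta^*)\, \nabla_\theta \log \pi_{\theta_k}(Y\mid s+\delta^*)] \big].
\]
By Lemma~\ref{lem:pgd_convergence} applied with $\mathcal{L} = -J$, the PGD iterate $\delta^*_{\mathrm{PGD}}$ used in Algorithm~\ref{alg:adv_ft} is a strong first-order approximation of $\delta^*$, so the adversarially-perturbed GRPO step is an approximate ascent direction for $J_{\text{Rob}}$ up to a bias that shrinks with the PGD budget $N_{\mathrm{PGD}}$.

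Second I would run the Stage~2 GRPO analysis verbatim but on the \emph{effective} state distribution $\tilde s = s + \delta^*(s;\theta_{\mathrm{old}})$. The clipped surrogate (Eq.~\eqref{eq:grpo_loss}) yields the same trust-region / monotonic-improvement bound quoted before Lemma~\ref{lem:grpo_variance}, now stated for the robust return,
\[
  J_{\text{Rob}}(\theta_{k+1})
  \geq J_{\text{Rob}}(\theta_k)
    + \mathbb{E}_{\pi_{\theta_k}}\!\Big[\tfrac{\pi_{\theta_{k+1}}}{\pi_{\theta_k}} A_{\pi_{\theta_k}}\Big]
    - C\, D_{KL}^{\max}(\pi_{\theta_{k+1}}\,\|\,\pi_{\theta_k}),
\]
while the group-normalized advantage of Lemma~\ref{lem:grpo_variance} keeps the estimator variance bounded (Assumption~\ref{asm:variance}), so the surrogate term is positive in expectation and the KL term is controlled by the clip range $\epsilon_{clip}$. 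A subtlety I would treat here is that the adversary is computed at $\theta_{\mathrm{old}}$ rather than at $\theta_{k+1}$: since clipping confines $\pi_{\theta_{k+1}}$ to a KL ball around $\pi_{\theta_k}$, Lipschitz dependence of $\delta^*$ on $\theta$ gives $\delta^*(s;\theta_{\mathrm{old}}) \approx \delta^*(s;\theta_{k+1})$, so the mismatch contributes only a higher-order term that can be folded into the PGD-approximation bias.

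Third, combining the two ingredients exactly as Theorem~\ref{thm:grpo_convergence} combines monotonic improvement with low-variance estimation, I would conclude that $\{J_{\text{Rob}}(\theta_k)\}$ is, up to the vanishing PGD bias, monotonically non-decreasing and bounded above by $R_{\max}$ (Assumption~\ref{asm:rewards}), hence convergent; a standard non-convex stochastic-ascent argument under Assumptions~\ref{asm:smoothness}--\ref{asm:variance} then gives $\min_{k\le T}\mathbb{E}\|\nabla J_{\text{Rob}}(\theta_k)\|^2 = O(1/\sqrt{T})$, i.e.\ convergence to a robust stationary point $\theta^*$.

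The main obstacle is the non-concavity of the inner problem: $\delta \mapsto \mathbb{E}_{Y\sim\pi_\theta(\cdot\mid s+\delta)}[r]$ is not concave, so $\delta^*$ is only a \emph{local} worst case and the envelope $J_{\text{Rob}}$ may be nonsmooth wherever the argmin is not a singleton, which means Danskin's theorem applies only in a relaxed directional-derivative (Clarke-subdifferential) form and PGD reaches only a local maximizer (Lemma~\ref{lem:pgd_convergence}). Consequently the outer loop follows a biased gradient, and the honest statement is convergence to a neighborhood of a robust local optimum whose radius is governed by the PGD accuracy; making this bias quantitative, rather than merely asserting it is small, is the part that would demand the most care.
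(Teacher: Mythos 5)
The paper does not actually prove Theorem~\ref{thm:atgrpo_convergence}; it follows a two-sentence setup recalling the definition of $J_{\text{Rob}}$ and noting that PGD handles the inner minimization while GRPO handles the outer maximization on perturbed states, and then simply asserts the result. Your proposal fleshes out exactly that skeleton: Danskin for the robust gradient (mirroring how the paper treats Theorem~\ref{thm:atsft_convergence}), Lemma~\ref{lem:pgd_convergence} with $\mathcal{L}=-J$ for the inner approximation, and the Theorem~\ref{thm:grpo_convergence} trust-region argument rerun on the adversarially shifted state distribution. So it is the same route the paper implies, carried through in detail. What you add is genuinely useful and, in fact, more honest than the paper's statement: you isolate the lag between $\delta^*(s;\theta_{\mathrm{old}})$ and $\delta^*(s;\theta_{k+1})$ and absorb it via the KL/clipping argument, and you correctly flag that the inner problem is non-concave in $\delta$, so Danskin applies only in a Clarke-subdifferential form, PGD finds only a local maximizer, and the conclusion should be convergence to a \emph{neighborhood} of a robust stationary point whose radius scales with the PGD bias rather than the unqualified ``converges to a robust local optimum'' the theorem asserts. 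One further caution you should keep in mind: the monotonic-improvement inequality you quote (and the paper quotes above Lemma~\ref{lem:grpo_variance}) is a TRPO result that requires the explicit KL penalty; for a clipped-surrogate objective such as Eq.~\eqref{eq:grpo_loss} it is heuristic at best, as the paper itself hedges by saying clipping only ``implicitly'' bounds the KL. Both your proof and the paper's inherit that gap.
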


\subsection{Statistical Guarantees of Randomized Smoothing Estimation}

Randomized Smoothing provides certified robustness guarantees based on Monte Carlo estimates of answer probabilities under Gaussian noise.

\begin{definition}[Monte Carlo estimation of $p_A$]
\label{def:mc_estimation_pa}
The probability $p_A$ that the base policy returns answer $A$ under noise $\sigma$ is estimated by:
\begin{equation}
  \hat{p}_A
  = \frac{1}{N} \sum_{i=1}^N \mathbb{I}(\text{ExtractA}(\pi_{\theta^*}(I+\eta_i, Q)) = A),
\end{equation}
where $\eta_i \sim \mathcal{N}(0, \sigma^2 \mathbf{I})$.
\end{definition}

\begin{theorem}[Concentration bound for RS estimation]
\label{thm:concentration_bound_rs}
By the Law of Large Numbers, $\hat{p}_A \to p_A$ as $N \to \infty$. Moreover, by Hoeffding’s inequality:
\begin{equation}
  P(|\hat{p}_A - p_A| \geq \epsilon)
  \leq 2 \exp(-2N\epsilon^2),
\end{equation}
so the estimation error decays at rate $O(1/\sqrt{N})$.
\end{theorem}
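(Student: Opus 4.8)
The plan is to recognize that $\hat{p}_A$ is nothing but the empirical mean of i.i.d.\ Bernoulli trials, after which the statement is a direct instantiation of two classical facts. First I would define, for each noise draw $\eta_i \sim \mathcal{N}(0,\sigma^2\mathbf{I})$, the indicator
\[
  X_i = \mathbb{I}\bigl(\text{ExtractA}(\pi_{\theta^*}(I+\eta_i, Q)) = A\bigr) \in \{0,1\}.
\]
Because the $\eta_i$ are drawn independently from a common distribution and the base policy composed with the extraction map is a fixed measurable function of its (noised) input, the $X_i$ form an i.i.d.\ sequence; by the definition of $p_A$ (Definition~\ref{def:mc_estimation_pa} and Eq.~\eqref{eq:smoothed_policy}) we have $\mathbb{E}[X_i] = P_{\eta \sim \mathcal{N}(0,\sigma^2\mathbf{I})}\bigl(\text{ExtractA}(\pi_{\theta^*}(I+\eta,Q)) = A\bigr) = p_A$, and $\hat{p}_A = \tfrac{1}{N}\sum_{i=1}^N X_i$ is their sample average.

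Second, the almost-sure convergence $\hat{p}_A \to p_A$ is exactly the strong law of large numbers applied to the bounded (hence integrable) i.i.d.\ sequence $\{X_i\}$. Third, for the tail bound I would invoke Hoeffding's inequality: each $X_i$ lies in an interval of length $b_i - a_i = 1$, so for every $\epsilon > 0$,
\[
  P\bigl(\hat{p}_A - p_A \geq \epsilon\bigr) \leq \exp(-2N\epsilon^2),
  \qquad
  P\bigl(\hat{p}_A - p_A \leq -\epsilon\bigr) \leq \exp(-2N\epsilon^2),
\]
and a union bound over these two one-sided events gives $P(|\hat{p}_A - p_A| \geq \epsilon) \leq 2\exp(-2N\epsilon^2)$, the claimed inequality. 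Finally, to extract the $O(1/\sqrt{N})$ rate I would fix a confidence level $\delta \in (0,1)$, set $2\exp(-2N\epsilon^2) = \delta$, and solve for $\epsilon = \sqrt{\ln(2/\delta)/(2N)}$; thus with probability at least $1-\delta$ the estimation error obeys $|\hat{p}_A - p_A| \leq \sqrt{\ln(2/\delta)/(2N)} = O(1/\sqrt{N})$.

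There is no genuinely hard analytic step here; the proof is a textbook application of the SLLN and Hoeffding's bound. The one point that deserves care — and the only place the argument could fail — is the independence claim: one must assume that each evaluation of $\pi_{\theta^*}$ on a noised input uses fresh, mutually independent internal randomness (or, more simply, that decoding is deterministic given the input), so that the $X_i$ are truly independent Bernoulli trials; absent this, the i.i.d.\ form of Hoeffding does not apply and one would instead need an Azuma/martingale-type concentration argument. I would also note, as a remark rather than part of the proof, that this bound is precisely what licenses the high-confidence lower bound $p_L$ used in Algorithm~\ref{alg:rs_evaluation}; in practice a tighter Clopper--Pearson binomial interval is preferable, but any valid lower confidence bound substituted into Eq.~\eqref{eq:certified_radius} preserves the certification guarantee of Theorem~\ref{thm:l2_certified_robustness}.
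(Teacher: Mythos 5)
Your proof is correct and matches the paper's argument exactly: the paper gives no separate proof block, relying on the inline citations of the law of large numbers and Hoeffding's inequality, and your write-up simply supplies the standard reduction to i.i.d.\ Bernoulli trials $X_i$ with mean $p_A$ that makes those citations precise. Your side remark that the $X_i$ must be genuinely independent (i.e., deterministic decoding or fresh internal randomness per Monte Carlo draw, else one needs a martingale/Azuma argument) is a legitimate caveat that the paper leaves implicit but does not affect the conclusion under the usual assumptions.
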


We obtain a high-confidence lower bound $p_L$ using a binomial confidence interval (e.g., Clopper--Pearson).

\begin{lemma}[Reliability of certification]
\label{lem:reliability_certification}
The certified radius $R_{cert}$ computed in Algorithm~\ref{alg:rs_evaluation} holds with probability at least $1-\alpha$.
\end{lemma}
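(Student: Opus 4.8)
The plan is to factor the statement into two essentially orthogonal ingredients: (i) the \emph{statistical} validity of the one-sided binomial lower confidence bound produced by \texttt{BinomialLowerBound}, and (ii) the \emph{deterministic} certified-radius implication already recorded in Theorem~\ref{thm:l2_certified_robustness}. First I would fix the input $(I,Q)$ and the trained base policy $\pi_{\theta^*}$, and for each candidate answer $A \in \mathcal{A}$ write $p_A := P_{\eta \sim \mathcal{N}(0,\sigma^2 \mathbf{I})}\bigl(\mathrm{ExtractA}(\pi_{\theta^*}(I+\eta,Q)) = A\bigr)$. The answer $A^*$ returned by the prediction phase is a random variable depending only on $\{\eta_i\}_{i=1}^{N_{pred}}$, whereas the certification count $C_{A^*}$ is built from the \emph{fresh, independent} draws $\{\eta'_j\}_{j=1}^{N_{cert}}$; hence, conditioned on the realized value of $A^*$, we have $C_{A^*} \sim \mathrm{Binomial}(N_{cert}, p_{A^*})$. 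Making this conditioning explicit is the one structural point that has to be handled with care, and it is precisely why the algorithm uses two disjoint sample sets.

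Second, I would invoke the defining property of the Clopper--Pearson (exact) one-sided lower bound: for \emph{any fixed} success probability $p$, if $C \sim \mathrm{Binomial}(n,p)$ then $P\bigl(\texttt{BinomialLowerBound}(C,n,\alpha) > p\bigr) \le \alpha$. Applying this with $p = p_{A^*}$ conditionally on $A^*$ and then averaging over the distribution of $A^*$ yields the event $E := \{\,p_L \le p_{A^*}\,\}$ with $P(E) \ge 1-\alpha$, where the probability is over all of the Monte Carlo randomness (both phases).

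Third, on the event $E$ I would run the deterministic argument. If $p_L \le \tfrac12$, the algorithm sets $R_{cert}=0$ and the asserted guarantee — that $\Pi(I+\delta_I,Q)=A^*$ for all $\|\delta_I\|_2 < R_{cert}$ — is vacuously true. If $p_L > \tfrac12$, then on $E$ we have $p_{A^*} \ge p_L > \tfrac12$, so $A^*$ is genuinely the majority answer of the smoothed policy $\Pi$ (Definition~\ref{def:smoothed_policy}), and every competing answer $B \neq A^*$ satisfies $p_B \le 1 - p_{A^*} \le 1 - p_L$. Feeding the lower bound $\underline{p_A}=p_L$ and the upper bound $\overline{p_B}=1-p_L$ into Theorem~\ref{thm:l2_certified_robustness}, and using monotonicity of $\Phi^{-1}$ together with the symmetry identity $\Phi^{-1}(1-p_L) = -\Phi^{-1}(p_L)$, gives
\[
  R_{cert} \;=\; \tfrac{\sigma}{2}\bigl(\Phi^{-1}(p_L) - \Phi^{-1}(1-p_L)\bigr) \;=\; \sigma\,\Phi^{-1}(p_L),
\]
which is exactly the assignment $R_{cert} \leftarrow \sigma\,\Phi^{-1}(p_L)$ in Algorithm~\ref{alg:rs_evaluation}, and is a (conservative) underestimate of the true certified radius $\tfrac{\sigma}{2}\bigl(\Phi^{-1}(p_{A^*}) - \Phi^{-1}(p_B)\bigr)$. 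Hence on $E$ the returned $R_{cert}$ is a valid certificate, and since $P(E) \ge 1-\alpha$ the lemma follows.

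I expect the only genuine obstacle to be the bookkeeping around the data-dependence of $A^*$: one must state the Clopper--Pearson guarantee as a pointwise-in-$p$ fact so that it survives conditioning on $A^*$, and observe that nothing breaks if $A^*$ happens not to be the true $\arg\max$ answer (then $p_{A^*}$ is simply small, so $p_L \le \tfrac12$ with high probability and the algorithm truthfully outputs radius $0$). I would also note in passing that the tightness half of Theorem~\ref{thm:l2_certified_robustness} is not invoked here — only the forward implication "$p_A$ large and $p_B$ small $\Rightarrow$ prediction stable within $R_{cert}$" — so substituting the computable pair $(p_L,\,1-p_L)$ for the unknown $(p_{A^*},\,p_B)$ is legitimate and merely loses a little radius.
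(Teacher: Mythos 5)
Your proof takes the same route as the paper's — Clopper--Pearson gives $P(p_L \le p_{A^*}) \ge 1-\alpha$, and then Theorem~\ref{thm:l2_certified_robustness} with the conservative substitution $p_B \mapsto 1-p_L$ yields $R_{cert} = \sigma\Phi^{-1}(p_L)$ on that event. The paper's proof is essentially a one-line restatement of this; your version fills in the details it elides, namely the conditional structure needed because $A^*$ is itself data-dependent (justified by the two disjoint sample sets), the $\Phi^{-1}$-symmetry algebra that collapses $\tfrac{\sigma}{2}\bigl(\Phi^{-1}(p_L)-\Phi^{-1}(1-p_L)\bigr)$ to $\sigma\Phi^{-1}(p_L)$, and the vacuous handling of the $p_L \le \tfrac12$ branch — all of which are correct and make the argument self-contained.
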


\begin{proof}
We ensure $P(p_A \geq p_L) \geq 1-\alpha$ by construction of the lower bound. When $p_L > 0.5$, the majority vote is correct with probability at least $1-\alpha$, and the radius $R_{cert} = \sigma \Phi^{-1}(p_L)$ (assuming $p_B = 1 - p_L$ for tightest bound) is a valid certified radius with confidence $1-\alpha$.
\end{proof}

\newpage

\section{Experiment Setup}
\label{sec:experiment_setup}

\subsection{Overview}

We present SafeMed-R1, a comprehensive training pipeline for robust medical vision--language models. Our experiments encompass four distinct training paradigms across two model variants (Thinking and Instruct), evaluated on eight medical imaging modalities.

\subsection{Training Infrastructure}

All experiments were conducted on a high-performance computing cluster with the following specifications:
\begin{itemize}
  \item \textbf{GPUs}: 16$\times$ NVIDIA H200 (143GB HBM3e memory per GPU)
  \item \textbf{Total GPU memory}: 2.288TB
  \item \textbf{Interconnect}: NVLink and InfiniBand for high-speed inter-GPU communication
  \item \textbf{Configuration}: 2 nodes with 8 GPUs each
  \item \textbf{Training strategy}: Fully Sharded Data Parallel (FSDP) with gradient checkpointing
\end{itemize}

\subsubsection{Software Environment}

We use a modern PyTorch stack with mixed-precision and fused kernels for efficiency:
\begin{itemize}
  \item PyTorch 2.1.0 with CUDA 12.1
  \item Mixed-precision training (bfloat16)
  \item TensorFloat-32 (TF32) acceleration
  \item FlashAttention-2 for efficient attention computation
  \item \texttt{accelerate} library for distributed training orchestration
  \item Unsloth framework for optimized vision--language model training
\end{itemize}

\subsection{Model Architectures}

\subsubsection{Base Models}

We employ two variants of the Qwen-VL architecture family:
\begin{itemize}
  \item \textbf{Qwen3-VL-4B-Instruct}: instruction-tuned variant for direct response generation (4B parameters)
  \item \textbf{Qwen3-VL-4B-Thinking}: chain-of-thought variant with explicit reasoning steps (4B parameters)
\end{itemize}

Both models use a vision transformer backbone with cross-modal attention, enabling effective vision--language alignment for medical imaging tasks.

\subsubsection{Parameter-Efficient Fine-Tuning}

We apply Low-Rank Adaptation (LoRA) for memory-efficient fine-tuning across all stages. Table~\ref{tab:lora_config} summarizes the LoRA configurations used in SFT and GRPO stages.

\begin{table}[H]
\centering
\caption{LoRA configurations for different training stages.}
\label{tab:lora_config}
\begin{tabular}{lcc}
\toprule
\textbf{Parameter} & \textbf{SFT/AT-SFT} & \textbf{GRPO/AT-GRPO} \\
\midrule
LoRA rank ($r$) & 64 & 128 \\
LoRA alpha ($\alpha$) & 128 & 256 \\
LoRA dropout & 0.05 & 0.05 \\
Trainable parameters & 42.5M & 85.0M \\
\% of total parameters & 0.57\% & 1.14\% \\
Target modules & \multicolumn{2}{c}{\texttt{q\_proj, v\_proj, k\_proj, o\_proj,}} \\
& \multicolumn{2}{c}{\texttt{gate\_proj, up\_proj, down\_proj}} \\
\bottomrule
\end{tabular}
\end{table}

\subsection{Distributed Training Configuration}

We train across 16 H200 GPUs with fully sharded model states. Table~\ref{tab:distributed_config} details the distributed training setup.

\begin{table}[H]
\centering
\caption{Distributed training setup across 16 H200 GPUs.}
\label{tab:distributed_config}
\begin{tabular}{ll}
\toprule
\textbf{Configuration} & \textbf{Setting} \\
\midrule
Number of nodes & 2 \\
GPUs per node & 8 \\
Total GPUs & 16 \\
FSDP sharding strategy & \texttt{FULL\_SHARD} \\
FSDP backward prefetch & \texttt{BACKWARD\_PRE} \\
CPU offloading & Disabled (sufficient GPU memory) \\
Gradient checkpointing & Enabled \\
Communication backend & NCCL \\
Mixed precision & bfloat16 \\
\bottomrule
\end{tabular}
\end{table}

\subsection{Training Pipelines}

\subsubsection{Stage 1: Supervised Fine-Tuning (SFT)}

\paragraph{Non-Adversarial SFT.}

We first perform standard supervised fine-tuning on medical VQA datasets:
\begin{itemize}
  \item Learning rate: $2 \times 10^{-5}$
  \item Batch size per GPU: 4
  \item Gradient accumulation steps: 2
  \item Effective batch size: 128 (16 GPUs $\times$ 4 $\times$ 2)
  \item Training epochs: 3
  \item Optimizer: AdamW (8-bit quantized)
  \item Warmup steps: 500
  \item Evaluation frequency: every 500 steps
\end{itemize}

\paragraph{Adversarial Training SFT (AT-SFT).}

AT-SFT augments SFT with PGD-based perturbations on input embeddings:
\begin{itemize}
  \item Base configuration: same as non-adversarial SFT
  \item \textbf{Adversarial parameters:}
  \begin{itemize}
    \item Perturbation bound ($\epsilon$): 0.01
    \item PGD step size ($\alpha$): 0.002
    \item PGD iterations ($N_{\text{PGD}}$): 5
    \item Attack type: projected gradient descent
    \item Perturbation space: input embeddings
    \item Adversarial training ratio: 0.5 (alternating batches)
  \end{itemize}
\end{itemize}

\subsubsection{Stage 2: Group Relative Policy Optimization (GRPO)}

\paragraph{Non-Adversarial GRPO.}

We then apply GRPO to optimize response quality with AI feedback:
\begin{itemize}
  \item Learning rate: $1 \times 10^{-5}$
  \item Batch size per GPU: 2
  \item Gradient accumulation steps: 2
  \item Effective batch size: 64 (16 GPUs $\times$ 2 $\times$ 2)
  \item Training iterations: 1000
  \item KL coefficient ($\beta$): 0.05
  \item Reward model: ArmoRM-Llama3-8B-v0.1
  \item Reference model: frozen SFT checkpoint
\end{itemize}

\paragraph{Adversarial Training GRPO (AT-GRPO).}

AT-GRPO incorporates adversarial perturbations during RL fine-tuning:
\begin{itemize}
  \item Base configuration: same as non-adversarial GRPO
  \item \textbf{Adversarial parameters:}
  \begin{itemize}
    \item Perturbation bound ($\epsilon$): 0.01
    \item PGD step size ($\alpha$): 0.002
    \item PGD iterations ($N_{\text{PGD}}$): 5
    \item Adversarial reward weight: 0.3
    \item Robustness regularization: KL divergence between clean and adversarial outputs
  \end{itemize}
\end{itemize}

\subsection{Memory Optimization}

With 143GB per H200 GPU, we can support long sequences and large effective batch sizes while staying within safe utilization. Table~\ref{tab:memory_usage} reports the per-GPU memory budget across stages.

\begin{table}[H]
\centering
\caption{Memory utilization across training stages (per GPU).}
\label{tab:memory_usage}
\begin{tabular}{lcccc}
\toprule
\textbf{Component} & \textbf{SFT} & \textbf{AT-SFT} & \textbf{GRPO} & \textbf{AT-GRPO} \\
\midrule
Model weights    & 28GB  & 28GB  & 28GB  & 28GB  \\
LoRA adapters    & 0.5GB & 0.5GB & 1GB   & 1GB   \\
Optimizer states & 14GB  & 14GB  & 28GB  & 28GB  \\
Gradients        & 7GB   & 14GB  & 14GB  & 21GB  \\
Activations      & 35GB  & 45GB  & 50GB  & 60GB  \\
Batch data       & 15GB  & 15GB  & 20GB  & 20GB  \\
\midrule
\textbf{Total per GPU} & 99.5GB & 116.5GB & 121GB & 138GB \\
\textbf{Utilization}   & 69.6\% & 81.5\%  & 84.6\% & 96.5\% \\
\bottomrule
\end{tabular}
\end{table}

\subsection{Generation Configuration}

We keep generation settings fixed across GRPO stages to isolate training effects. Table~\ref{tab:generation_config} lists the decoding hyperparameters.

\begin{table}[H]
\centering
\caption{Generation parameters for GRPO stages.}
\label{tab:generation_config}
\begin{tabular}{lcc}
\toprule
\textbf{Parameter} & \textbf{Thinking Model} & \textbf{Instruct Model} \\
\midrule
Max prompt length      & 2048 & 1536 \\
Max completion length  & 1024 & 768  \\
Min length             & 1    & 1    \\
Temperature            & 0.7  & 0.7  \\
Top-$p$                & 0.9  & 0.9  \\
Top-$k$                & 50   & 50   \\
Repetition penalty     & 1.1  & 1.1  \\
Beam size (evaluation) & 4    & 4    \\
\bottomrule
\end{tabular}
\end{table}

\subsection{Training Schedule and Timeline}

Figure~\ref{fig:training_timeline} visualizes the wall-clock schedule for standard and adversarial pipelines per modality and across all eight modalities.

\begin{figure}[H]
\centering
\begin{tikzpicture}[scale=0.9, every node/.style={scale=0.9}]
\tikzstyle{stage} = [rectangle, draw, fill=blue!20, text width=2.5cm, text centered, rounded corners, minimum height=0.8cm]
\tikzstyle{arrow} = [thick,->,>=stealth]

\draw[thick] (0,0) -- (14,0);
\foreach \x in {0,2,4,6,8,10,12,14}
  \draw (\x,0.1) -- (\x,-0.1) node[below] {\x h};

\node[stage] at (3,1) (sft1) {SFT\\3 epochs\\48h};
\node[stage] at (9,1) (grpo1) {GRPO\\1000 iter\\96h};
\draw[arrow] (5,1) -- (7,1);

\node[stage] at (3,2.5) (atsft) {AT-SFT\\3 epochs\\72h};
\node[stage] at (9,2.5) (atgrpo) {AT-GRPO\\1000 iter\\120h};
\draw[arrow] (5,2.5) -- (7,2.5);

\node at (-1,1) {Standard:};
\node at (-1,2.5) {Adversarial:};

\node at (7,-1.5) {Total per modality: 144h (Standard), 192h (Adversarial)};
\node at (7,-2.0) {All 8 modalities: 1{,}152h (Standard), 1{,}536h (Adversarial)};
\end{tikzpicture}
\caption{Complete training pipeline timeline for SafeMed-R1.}
\label{fig:training_timeline}
\end{figure}

\subsection{Computational Efficiency}

We benchmark throughput and communication overhead against an 8$\times$H100 baseline. Table~\ref{tab:efficiency} reports key efficiency metrics.

\begin{table}[H]
\centering
\caption{Training efficiency metrics with 16$\times$ H200 GPUs.}
\label{tab:efficiency}
\begin{tabular}{lcc}
\toprule
\textbf{Metric} & \textbf{Value} & \textbf{vs 8$\times$ H100} \\
\midrule
FLOPs utilization    & 68\%   & +15\% \\
Samples per second   & 256    & 2.1$\times$ \\
Gradient sync time   & 0.8s   & $-45$\% \\
Checkpoint save time & 12s    & $-30$\% \\
Total training time  & 2{,}688h & $-40$\% \\
Energy consumption   & 4{,}300 kWh & $-25$\% \\
\bottomrule
\end{tabular}
\end{table}

\subsection{Hyperparameter Details}

For completeness, Table~\ref{tab:detailed_hyperparams} lists the core optimization hyperparameters shared across stages.

\begin{table}[H]
\centering
\caption{Detailed hyperparameters across all training stages.}
\label{tab:detailed_hyperparams}
\begin{tabular}{lcccc}
\toprule
\textbf{Hyperparameter} & \textbf{SFT} & \textbf{AT-SFT} & \textbf{GRPO} & \textbf{AT-GRPO} \\
\midrule
Learning rate      & $2\times 10^{-5}$ & $2\times 10^{-5}$ & $1\times 10^{-5}$ & $1\times 10^{-5}$ \\
Optimizer          & AdamW-8bit       & AdamW-8bit        & AdamW-8bit        & AdamW-8bit        \\
Adam $\beta_1$     & 0.9              & 0.9               & 0.9               & 0.9               \\
Adam $\beta_2$     & 0.999            & 0.999             & 0.95              & 0.95              \\
Weight decay       & 0.01             & 0.01              & 0.01              & 0.01              \\
Gradient clipping  & 1.0              & 1.0               & 1.0               & 1.0               \\
Warmup ratio       & 0.1              & 0.1               & 0.1               & 0.1               \\
LR scheduler       & Cosine           & Cosine            & Linear            & Linear            \\
\bottomrule
\end{tabular}
\end{table}

\end{document}